\theoremstyle{plain}
\newtheorem{theorem}{Theorem}[section]
\newtheorem{lemma}[theorem]{Lemma}
\theoremstyle{definition}
\newtheorem{definition}[theorem]{Definition}
\theoremstyle{remark}
\newcommand{\ignore}[1]{{}}
\newcommand{\cA}{\mathcal{A}}
\newcommand{\cS}{\mathcal{S}}
\newcommand{\ones}{\textbf{1}}
\newcommand{\tr}{\top}
\newcommand{\bR}{\mathbb{R}}
\newcommand{\bE}{\mathbb{E}}
\newcommand{\pid}{\pi_{d,\theta}}
\newcommand{\pidhat}{\hat{\pi}_{d,\theta}}
\newcommand{\pib}{\pi_b}
\newcommand{\Esd}{E_{s,d}}
\newcommand{\Csd}{C_{s,d}}
\newcommand{\Var}{\operatorname{Var}} 
\newcommand{\Sm}{B}
\newcommand{\Tr}{\operatorname{Tr}}
\newcommand{\hP}{\hat{P}}
\newcommand{\hr}{\hat{r}}
\newcommand{\treepol}{\text{SoftTreeMax}}
\newcommand{\txC}{\text{C}}
\newcommand{\txE}{\text{E}}
\newcommand{\treepolC}{\text{C-\treepol}}
\newcommand{\treepolE}{\text{E-\treepol}}
\newcommand{\h}[1]{\hat{#1}}
\icmltitlerunning{Policy Gradient with Tree Expansion}
\begin{document}

\twocolumn[
\icmltitle{Policy Gradient with Tree Expansion}



\icmlsetsymbol{equal}{*}

\begin{icmlauthorlist}
\icmlauthor{Gal Dalal}{equal,yyy}
\icmlauthor{Assaf Hallak}{equal,yyy}
\icmlauthor{Gugan Thoppe}{iis}
\icmlauthor{Shie Mannor}{yyy,tch}
\icmlauthor{Gal Chechik}{yyy,biu}
\end{icmlauthorlist}

\icmlaffiliation{yyy}{NVIDIA Research}
\icmlaffiliation{iis}{Indian Institute of Science}
\icmlaffiliation{tch}{Technion University}
\icmlaffiliation{biu}{Bar-Ilan University}

\icmlcorrespondingauthor{Gal dalal}{gdalal@nvidia.com}
\icmlcorrespondingauthor{Assaf Hallak}{ahallak@nvidia.com}

\icmlkeywords{Machine Learning, ICML}

\vskip 0.3in
]



\printAffiliationsAndNotice{\icmlEqualContribution} 

\begin{abstract}
Policy gradient methods are notorious for having a large variance and high sample complexity. To mitigate this, we introduce \treepol{}---a generalization of softmax that employs planning. In \treepol{}, we extend the traditional logits with the multi-step discounted cumulative reward, topped with the logits of future states. We analyze \treepol{} and explain how tree expansion helps to reduce its gradient variance. We prove that the variance depends on the chosen tree-expansion policy. Specifically, we show that the closer the induced transitions are to being state-independent, the stronger the variance decay. With approximate forward models, we prove that the resulting gradient bias diminishes with the approximation error while retaining the same variance reduction. Ours is the first result to bound the gradient bias for an approximate model. In a practical implementation of \treepol{}, we utilize a parallel GPU-based simulator for fast and efficient tree expansion. Using this implementation in Atari, we show that \treepol{} reduces the gradient variance by three orders of magnitude. This leads to better sample complexity and improved performance compared to distributed PPO.
\end{abstract}

\section{Introduction}

Policy Gradient (PG) 
 methods \citep{sutton1999policy}
 for Reinforcement Learning (RL) are often the first choice for environments that allow numerous interactions at a fast pace \citep{schulman2017proximal}. Their success is attributed to several factors: they are easy to distribute to multiple workers, require no assumptions on the underlying value function, and have both on-policy and off-policy variants. 

 Despite these positive features, 
PG algorithms are also notoriously unstable due to the high variance of the gradients computed over entire trajectories \citep{liu2020improved, xu2020improved}. As a result, PG algorithms tend to be highly inefficient in terms of sample complexity. Several solutions have been
proposed to mitigate the high variance issue,
including baseline subtraction \citep{greensmith2004variance,  thomas2017policy, wu2018variance}, anchor-point averaging \citep{papini2018stochastic}, and other variance reduction techniques \citep{zhang2021convergence, shen2019hessian, pham2020hybrid}.

A second family of algorithms that achieved state-of-the-art results in several domains is based on planning. Planning is exercised primarily in the context of value-based RL and is usually implemented using a Tree Search (TS) \citep{ silver2016mastering, schrittwieser2020mastering}. 
In this work, we
combine PG with TS by introducing a parameterized differentiable policy that incorporates tree expansion. Namely, our \treepol{} policy replaces the standard policy logits of a state and action, with the expected value of trajectories that originate from these state and action.  We consider two variants of SoftTreeMax, one for cumulative reward and one for exponentiated reward.

Unlike approaches that incorporate multi-step returns for value function estimation (e.g., n-step TD methods), our work explicitly integrates planning into the policy parameterization itself. This distinction is crucial—while n-step returns serve as a Monte Carlo estimation technique for the advantage function $A$ in the gradient estimator $E[(\nabla \log \pi)A]$, our SoftTreeMax affects the policy $\pi$ directly. This enables us to obtain fundamentally different variance reduction properties while keeping the policy gradient framework intact.

Combining TS and PG should be done with care given the biggest downside of PG---its high gradient variance. This raises   questions that were ignored until this work: (i) How to design a PG method based on tree-expansion that is stable and performs well in practice? and (ii) How does the tree-expansion policy affect the PG variance?  
Here, we analyze \treepol, and provide a practical methodology to choose the expansion policy to minimize the resulting variance. Our main result shows that a desirable expansion policy is one, under which the induced transition probabilities are similar for each starting state.
More generally, we show that the gradient variance of \treepol{} decays at a rate of $|\lambda_2|^d,$ where $d$ is the depth of the tree and $\lambda_2$ is the second eigenvalue of the transition matrix induced by the tree expansion policy. This work is the first to prove such a relation between PG variance and tree expansion policy. In addition, we prove that with an approximate forward model, the bias of the gradient is bounded proportionally to the approximation error of the model.

To verify our results, we implemented a practical version of \treepol{} that exhaustively searches the entire tree and applies a neural network on its leaves. We test our algorithm on a parallelized Atari GPU simulator \citep{dalton2020accelerating}. To enable a tractable deep search, up to depth eight, we also introduce a pruning technique that limits the width of the tree. We do so by sampling only the most promising nodes at each level.  We integrate our \treepol\ GPU implementation into the popular PPO \citep{schulman2017proximal} and compare it to the flat distributed variant of PPO. This allows us to demonstrate the potential benefit of utilizing learned models while isolating the fundamental properties of TS without added noise. In all tested Atari games, our results outperform the baseline and obtain up to 5x more reward. We further show in Section~\ref{sec: experiments} that the associated gradient variance is smaller by three orders of magnitude in all games, demonstrating the relation between low gradient variance and high reward.

We summarize our key contributions. 
(i) We show how to combine two families of SoTA approaches: PG and TS by \textbf{introducing \treepol:} a 
novel parametric policy that generalizes softmax to planning. Specifically, we propose two variants based on cumulative and exponentiated rewards.
(ii) We \textbf{prove that the gradient variance of \treepol\ in its two variants decays } with its tree depth.  Our analysis sheds new light on the choice of tree expansion policy. It raises the question of optimality in terms of variance versus the traditional regret; e.g., in UCT \citep{kocsis2006bandit}. 
(iii) We prove that with an approximate forward model, the \textbf{gradient bias is proportional to the approximation error}, while retaining the variance decay. This quantifies the accuracy required from a learned forward model. 
(iv) We \textbf{implement a differentiable deep version of \treepol} that employs a parallelized GPU tree expansion. We demonstrate how its gradient variance is reduced by three orders of magnitude over PPO while obtaining up to 5x reward.

\section{Preliminaries}
\label{sec:preliminaries}
Let $\Delta_U$ denote simplex over the set $U.$ 
Throughout, we consider
a discounted Markov Decision Process (MDP) $\mathcal{M} = (\cS, \cA, P, r,\gamma, \nu)$, where $\cS$ is a finite state space of size $S$, $\cA$ 
is a finite action space of size $A$, $r: \cS \times \cA \to [0,1]$ is the reward function, $P: \cS \times \cA \to \Delta_\cS$ is the transition function, $\gamma \in (0,1)$ is the discount factor, and $\nu \in \bR^S$ is the initial state distribution. 
We denote the transition matrix starting from state $s$ by $P_s \in [0,1]^{A\times S},$ i.e., $[P_s]_{a,s'} = P(s'|a,s).$ Similarly, let $R_s = r(s, \cdot) \in \bR^A$ denote the corresponding reward vector.
Separately, 
let $\pi: \cS \rightarrow \Delta_\cA$ be a stationary policy. Let $P^{\pi}$ and $R_\pi$ be the induced transition matrix and reward function, respectively, i.e., $P^{\pi}(s'|s) = \sum_a \pi(a|s) \Pr(s'|s, a)$ and  $R_{\pi}(s) = \sum_a \pi(a|s) r(s, a)$.
Denote the stationary distribution of $P^\pi$ by $\mu_\pi \in \bR^{S}$  s.t. $\mu_\pi ^\top P^\pi = P^\pi,$ and the discounted state visitation frequency by $d_\pi$ so that $d_\pi^\top = (1 - \gamma) \sum_{t=0}^{\infty} \gamma^t \nu^\top (P^\pi)^t.$
Also, let $V^\pi \in \bR^S$ be the value function of $\pi$ defined by $V^\pi(s) = \bE^\pi \left[ \sum_{t=0}^\infty \gamma^t r\left(s_t,\pi(s_t)\right)  \mid s_0 = s \right]$, and let $Q^\pi \in \bR^{S \times A}$ be the Q-function such that $Q^\pi(s,a) = \bE^\pi\left[r(s, a) +\gamma V^\pi(s')\right]$. Our goal is to find an optimal policy $\pi^\star$ such that
$
    V^\star(s)
    \equiv
    V^{\pi^\star}(s)
    =
    \max_{\pi} V^\pi(s),~ \forall s \in \cS.
$

For the analysis in Section~\ref{sec:theory}, we introduce the following
notation. Denote by $\Theta \in \bR^S$ the vector representation of $\theta(s)~\forall s \in \cS.$ For a vector $u$, denote by $\exp(u)$ the coordinate-wise exponent of $u$ and by $D(u)$ the diagonal square matrix with $u$ in its diagonal. For a matrix $A$, denote its $i$-th eigenvalue by $\lambda_i(A).$ Denote the $k$-dimensional identity matrix and all-ones vector by $I_k$ and $\ones_k,$ respectively. Also, denote the trace operator by $\Tr.$ Finally, we treat all vectors as column vectors. 
\subsection{Policy Gradient}
PG schemes seek to maximize the cumulative reward as a function of the policy $\pi_\theta(a|s)$ by performing gradient steps on $\theta$. 
The celebrated Policy Gradient Theorem \citep{sutton1999policy} states that
\begin{equation*}
    \frac{\partial}{\partial \theta}  \nu^\top V^{\pi_\theta} = \bE_{s\sim{d_{\pi_\theta}} ,a\sim\pi_\theta(\cdot|s)}\left[\nabla_\theta\log\pi_\theta(a|s)Q^{\pi_\theta}(s,a)\right],
\end{equation*}
where $\nu$ and $d_{\pi_\theta}^\tr$ are as defined above. 
The variance of the gradient is thus
\begin{equation}\label{eq:grad_var}
\Var_{s\sim{d_{\pi_\theta}},a\sim\pi_\theta(\cdot|s)}\left(\nabla_\theta\log\pi_\theta(a|s)Q^{\pi_\theta}(s,a)\right).
\end{equation}

In the notation above, we denote the variance of a vector random variable $X$ by
\begin{equation*}
\Var_{x }\left(X\right) = \Tr  \left[ \bE_{x }\left[\left(X - \bE_{x }X\right)^\top \left(X- \bE_{x }X \right) \right] \right],
\end{equation*}
 similarly as in \citep{greensmith2004variance}. From now on, we drop the subscript from $\Var$ in \eqref{eq:grad_var} for brevity. When the action space is discrete, a commonly used parameterized policy is softmax: $\pi_\theta(a|s) \propto \exp \left(\theta(s, a) \right),$ where $\theta: \cS \times \cA \rightarrow \bR$ is a state-action parameterization.

\section{\treepol: Exponent of trajectories}
\label{sec: softtreemax}
We introduce a new family of policies called \treepol, which are a model-based generalization of the popular softmax. We propose two variants: Cumulative (\treepolC) and Exponentiated (\treepolE). In both variants, we replace the generic softmax logits $\theta(s,a)$ with the score of a trajectory of horizon $d$ starting from $(s,a),$
generated by applying a behavior policy $\pib$. In \treepolC, we exponentiate the expectation of the logits. In \treepolE, we first exponentiate the logits and then only compute their expectation.

\textbf{Logits}. We define the \treepol\ logit $\ell_{s,a}(d;\theta)$ to be the random variable depicting the score of a trajectory of horizon $d$ starting from $(s,a)$ and following the policy $\pib$:
\begin{equation}\label{eq:logit}
     \ell_{s, a}(d;\theta) = \gamma^{-d} \left[\sum_{t=0}^{d-1} \gamma^t r_t + \gamma^d \theta(s_d)\right].
\end{equation} 
In the above expression, note that  
$s_0 = s,~a_0 = a,~a_t \sim \pib(\cdot|s_t)~\forall t \geq 1,$ and $r_t\equiv r\left(s_t,a_t\right).$ 
For brevity of the analysis, we let the parametric score $\theta$ in \eqref{eq:logit} be state-based, similarly to a value function. Instead, one could use a state-action input analogous to a Q-function. Thus, \treepol{} can be integrated into the two types of implementation of RL algorithms in standard packages. Lastly, the preceding $\gamma^{-d}$ scales the $\theta$ parametrization to correspond to its softmax counerpart.

\textbf{\treepolC}. Given an inverse temperature parameter $\beta$, 
 we let \treepolC\ be
    \begin{equation}\label{eq:polC}
        \pid^{\txC} (a|s)  \propto \exp \left[\beta \bE^{\pib} \ell_{s, a}(d;\theta) \right].      
    \end{equation}   
\treepolC\ gives higher weight to actions that result in higher expected returns. While standard softmax relies entirely on parametrization $\theta,$ \treepolC\ also interpolates a Monte-Carlo portion of the reward. 

\textbf{\treepolE}. The second operator we propose is \treepolE: 
\begin{equation}\label{eq:polE}
        \pid^{\txE} (a|s)  \propto \bE^{\pib} \exp \left[ \left( \beta  \ell_{s, a}(d;\theta)   \right) \right];
\end{equation} 
here, the expectation is taken outside the exponent. This objective corresponds to the exponentiated reward objective which is often used for risk-sensitive RL \citep{howard1972risk, fei2021exponential, noorani2021risk}. The common risk-sensitive objective is of the form $\log \bE[\exp(\delta R)],$ where $\delta$ is the risk parameter and $R$ is the cumulative reward. Similarly to that literature, the exponent in \eqref{eq:polE} emphasizes the most promising trajectories. 

\textbf{\treepol\ properties}. \treepol\ is a natural model-based generalization of softmax. For $d=0$, both variants above coincide since \eqref{eq:logit} becomes deterministic. In that case, for a state-action parametrization, they reduce to standard softmax.
When $\beta \rightarrow 0$, both variants again coincide and sample actions uniformly (exploration). When $\beta \rightarrow \infty,$ the policies become deterministic and greedily optimize for the best trajectory (exploitation). For \treepolC, the best trajectory is defined in expectation, while for \treepolE{} it is defined in terms of the best sample path.

\textbf{\treepol\ behavior policy selection}. The choice of behavior policy $\pi_b$ plays a crucial role in the performance of \treepol. As we show in Section~\ref{sec:theory}, the gradient variance is minimized when the transitions induced by $\pi_b$ result in similar distributions across states, which is achieved when the second eigenvalue of $P^{\pi_b}$ is small. Without specific assumptions on the MDP, a uniform policy that smoothens transition probabilities across all actions serves as a reasonable approximation to this goal. In practice, this leads to better mixing properties in the associated Markov chain.

\textbf{\treepol\ convergence.} Under regularity conditions, for any parametric policy, PG converges to local optima \citep{bhatnagar2009natural}, and thus also \treepol. For softmax PG, asymptotic \citep{agarwal2021theory} and rate results \citep{mei2020global} were recently obtained, by showing that the gradient is strictly positive everywhere \citep[Lemmas~8-9]{mei2020global}. We conjecture that \treepol\ satisfies the same property, being a generalization of softmax, but formally proving it is subject to future work.

\textbf{\treepol\ gradient.}
The two variants of \treepol\ involve an expectation taken over $S^d$ many trajectories from the root state $s$ and weighted according to their probability. 
Thus, during the PG training process, the gradient $\nabla_\theta \log \pi_\theta$ is calculated using a weighted sum of gradients over all reachable states starting from $s$.  
Our method exploits the exponential number of trajectories to reduce the variance while improving performance.
Indeed,
in the next section we prove that the gradient variance of \treepol{} decays exponentially fast as a function of the behavior policy $\pib$ and trajectory length $d$. In the experiments in Section~\ref{sec: experiments}, we also show how the practical version of \treepol{} achieves a significant reduction in the noise of the PG process and leads to faster convergence and higher reward.

\section{Theoretical Analysis}
\label{sec:theory}
In this section, we first bound the variance of PG when using the \treepol{} policy. Later, we discuss how the gradient bias resulting due to approximate forward models diminishes as a function of the approximation error, while retaining the same variance decay.

We show that the variance decreases  with the tree depth, and the rate is determined by the second eigenvalue of the transition kernel induced by $\pib.$ Specifically, we bound the same expression for variance as appears in \citep[Sec.~3.5]{greensmith2004variance} and  \citep[Sec.~A, Eq.~(21)]{wu2018variance}.
 Other types of analysis could instead have focused on the estimation aspect in the context of sampling \citep{zhang2021convergence, shen2019hessian, pham2020hybrid}. Indeed, in our implementation in Section~\ref{sec: deep rl implementation}, we manage to avoid sampling and directly compute the expectations in Eqs. \eqref{eq:polC} and \eqref{eq:polE}. As we show later, we do so by leveraging efficient parallel simulation on the GPU in feasible run-time. In our application, due to the nature of the finite action space and quasi-deterministic Atari dynamics \citep{bellemare2013arcade}, our expectation estimator is noiseless. We encourage future work to account for the finite-sample variance component. We defer all the proofs to Appendix~\ref{app:proofs}.

We begin with a general variance bound that holds for {\em any} parametric policy.

\begin{lemma}[Bound on the policy gradient variance]\label{lem:var_bound}
    Let $\nabla_\theta\log \pi_\theta(\cdot|s) \in \mathbb{R}^{A  \times \dim(\theta)}$ be a matrix whose $a$-th row is $\nabla_\theta\log \pi_\theta(a|s)^\top$. For any parametric policy $\pi_\theta$ and function $Q^{\pi_\theta}:\cS\times\cA \rightarrow \bR,$ 
\begin{align*} 
\Var\left(\nabla_\theta\log \pi_\theta(a|s) Q^{\pi_\theta}(s,a)\right) 
\leq & \\ \max_{s,a} \left[Q^{\pi_\theta}(s,a)\right]^2   \max_s & \| \nabla_\theta\log \pi_\theta(\cdot|s)\|_F^2.
\end{align*}
\end{lemma}

Hence, to bound \eqref{eq:grad_var}, it is sufficient to bound the Frobenius norm $\|\nabla_\theta\log \pi_\theta(\cdot|s)\|_F$ for any $s$.

Note that \treepol\ does not reduce the gradient uniformly, which would have been equivalent to a trivial change in the learning rate. While the gradient norm shrinks, the gradient itself scales differently along the different coordinates. This scaling occurs along different eigenvectors, as a function of problem parameters ($P$, $\theta$) and our choice of behavior policy ($\pi_b$), as can be seen in the proof of the upcoming Theorem~\ref{thm:rate_result}. This allows \treepol\ to learn a good ``shrinkage'' that, while reducing the overall gradient, still updates the policy quickly enough. This reduction in norm and variance resembles the idea of gradient clipping \cite{zhang2019gradient}, where the gradient is scaled to reduce its variance, thus increasing stability and improving overall performance.

A common assumption in the RL literature \citep{szepesvari2010algorithms} that we adopt for the remainder of the section is that the transition matrix $P^{\pib},$ induced by the behavior policy $\pib,$ is irreducible and aperiodic. 
Consequently, its second highest eigenvalue satisfies $|\lambda_2(P^{\pib})| < 1.$

From now on, we divide the variance results for the two variants of \treepol\ into two subsections. For \treepolC, the analysis is simpler and we provide an exact bound. The case of \treepolE\ is more involved and we provide for it a more general result. In both cases, we show that the variance decays exponentially with the planning horizon. 
\subsection{Variance of \treepolC}

We express \treepolC\ in vector form as follows.

\begin{lemma}[Vector form of \treepolC]\label{def:pidC} For $d\geq 1,$ \eqref{eq:polC} is given by
\begin{equation}\label{eq:SoftTreeMax_matrix} 
    \pid^{\txC}(\cdot|s) = \frac{ \exp\left[\beta \left(\Csd +  P_s \left(P^{\pib}\right)^{d-1} \Theta \right)\right]}{\ones_A^\top \exp\left[\beta \left(\Csd +  P_s \left(P^{\pib}\right)^{d-1} \Theta \right)\right]},
    \end{equation}
        where
    \begin{equation*}
        \Csd = \gamma^{-d} R_s + P_s \left[\sum_{h=1}^{d-1} \gamma^{h - d} \left(P^{\pib}\right)^{h - 1} \right] R_{\pib}. 
    \end{equation*}
\end{lemma}

The vector $\Csd \in \bR^{A}$ represents the cumulative discounted reward in expectation along the trajectory of horizon $d.$ This trajectory starts at state $s,$ involves an initial reward dictated by $R_s$ and an initial transition as per $P_s.$ Thereafter, it involves rewards and transitions specified by $R_{\pib}$ and $P^{\pib},$ respectively. Once the trajectory reaches depth $d,$ the score function $\theta(s_d)$ is applied,.

\begin{lemma}[Gradient of \treepolC]\label{lem:analytic_gradC}
The \treepolC\ gradient is given by
\begin{align*} 
 \nabla_\theta\log \pid^{\txC} =\beta\left[I_{A} -  \ones_A {(\pid^{\txC}})^\top \right]P_s \left(P^{\pib}\right)^{d-1},
\end{align*}
in $\bR^{A \times S},$ where for brevity, we drop the $s$ index in the policy above, i.e., ${\pid^{\txC} \equiv \pid^{\txC}(\cdot|s).}$
\end{lemma}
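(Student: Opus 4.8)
The plan is to reduce the claim to the well-known Jacobian of the log-softmax map composed with an affine function of $\Theta$. Starting from the vector form in Lemma~\ref{def:pidC}, I would introduce the shorthand $z(\Theta) := \beta\bigl(\Csd + \gamma^d P_s (P^{\pib})^{d-1}\Theta\bigr) \in \bR^A$, so that $\pid^{\txC}(\cdot|s) = \exp(z)/\bigl(\ones_A^\top \exp(z)\bigr)$ is exactly $\mathrm{softmax}(z)$. The key structural observation is that $z$ depends on $\Theta$ only through the linear term $\beta\gamma^d P_s (P^{\pib})^{d-1}\Theta$: indeed $\Csd$, as given in \eqref{eq: Csd}, is built solely from $R_s$, $R_{\pib}$, $P_s$ and $P^{\pib}$ and carries no $\Theta$-dependence, so the Jacobian of $z$ with respect to $\Theta$ is the constant $A\times S$ matrix $\beta\gamma^d P_s (P^{\pib})^{d-1}$.

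Next I would compute the Jacobian of $\Theta \mapsto \log\pid^{\txC}(\cdot|s)$ by the chain rule in two steps. First, for a generic $z\in\bR^A$ with $p=\mathrm{softmax}(z)$, writing $\log p_a = z_a - \log\bigl(\ones_A^\top\exp(z)\bigr)$ and differentiating gives $\partial(\log p_a)/\partial z_c = \delta_{ac} - p_c$, i.e.\ the $A\times A$ Jacobian is $I_A - \ones_A p^\top$. Second, composing with the affine map above yields
\[
\nabla_\theta \log \pid^{\txC} = \bigl(I_A - \ones_A (\pid^{\txC})^\top\bigr)\,\beta\gamma^d P_s (P^{\pib})^{d-1} = \beta\gamma^d\bigl(I_A - \ones_A (\pid^{\txC})^\top\bigr) P_s (P^{\pib})^{d-1},
\]
which is the stated expression. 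I would also note that the $a$-th row of this matrix equals $\nabla_\theta\log\pid^{\txC}(a|s)^\top$, consistent with the matrix convention fixed in Lemma~\ref{lem:var_bound}.

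There is essentially no analytic obstacle; the only points needing care are bookkeeping ones: (i) verifying that $\Csd$ is $\Theta$-independent so that it disappears under differentiation; (ii) keeping the row/column orientation consistent, so that the $A\times A$ matrix $\ones_A (\pid^{\txC})^\top$ (rather than its transpose) appears; and (iii) handling the $d\geq 1$ boundary, reading $(P^{\pib})^{d-1}$ as $I_S$ when $d=1$, exactly as in Lemma~\ref{def:pidC}. This lemma is the workhorse for the variance analysis: combined with Lemma~\ref{lem:var_bound}, it reduces the gradient variance to the Frobenius norm of the matrix above, and the factor $(P^{\pib})^{d-1}$ acting on the mean-zero rows left by $I_A - \ones_A(\pid^{\txC})^\top$ is precisely what will later produce the $|\lambda_2(P^{\pib})|^{d}$ decay.
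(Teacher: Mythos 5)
Your proposal is correct and follows essentially the same route as the paper: the paper also differentiates the log of the vector form from Lemma~\ref{def:pidC}, with only the linear term $\beta\gamma^d P_s(P^{\pib})^{d-1}\Theta$ surviving, and arrives at the identity entrywise, which is exactly your log-softmax Jacobian $I_A - \ones_A(\pid^{\txC})^\top$ composed with the constant Jacobian via the chain rule.
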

We are now ready to present our first main result:
\begin{theorem}[Variance decay of \treepolC]\label{thm:rate_result}
For every $Q: \cS \times \cA \rightarrow \bR,$ the \treepolC\ policy gradient variance is bounded by
\begin{align*}
\Var \left(\nabla_\theta \log \pid^{\txC}(a|s) Q(s,a)\right)  
\leq& \\ 2\frac{A^2 S^2\beta^2}{(1 - \gamma)^2} &|\lambda_2(P^{\pib})|^{2(d-1)}.
\end{align*}
\end{theorem}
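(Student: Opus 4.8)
The plan is to chain the two preceding lemmas. By Lemma~\ref{lem:var_bound} it suffices to bound $\max_{s,a}Q(s,a)^2$ and $\max_s\|\nabla_\theta\log\pid^{\txC}(\cdot|s)\|_F^2$ separately; the first factor is at most $(1-\gamma)^{-2}$ because the rewards lie in $[0,1]$ and the relevant $Q$ is the associated $Q$-function, which is exactly the source of the $(1-\gamma)^{-2}$ term. For the gradient factor I would substitute the closed form of Lemma~\ref{lem:analytic_gradC}, pull the scalar $\beta\gamma^d$ out of the Frobenius norm (producing $\beta^2\gamma^{2d}$), and reduce everything to bounding $\|[I_A - \ones_A(\pid^{\txC})^\top]\,P_s\,(P^{\pib})^{d-1}\|_F$.

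The key algebraic step — the one that makes the second eigenvalue surface — is that the left factor annihilates $\ones_A$: since $(\pid^{\txC})^\top\ones_A = 1$ we have $[I_A - \ones_A(\pid^{\txC})^\top]\ones_A = 0$, and since $P_s$ is row-stochastic, $P_s\ones_S = \ones_A$. Hence, writing $\mu_{\pib}$ for the stationary distribution of $P^{\pib}$,
\[
[I_A - \ones_A(\pid^{\txC})^\top]P_s(P^{\pib})^{d-1} = [I_A - \ones_A(\pid^{\txC})^\top]P_s\,[(P^{\pib})^{d-1} - \ones_S\mu_{\pib}^\top],
\]
because the subtracted piece $[I_A - \ones_A(\pid^{\txC})^\top]P_s\ones_S\mu_{\pib}^\top = [I_A - \ones_A(\pid^{\txC})^\top]\ones_A\mu_{\pib}^\top$ vanishes. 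I would then bound the Frobenius norm of the product by submultiplicativity, e.g. $\|XYZ\|_F\le\|X\|_F\|Y\|_2\|Z\|_2$, estimate $\|I_A - \ones_A(\pid^{\txC})^\top\|_F$ and $\|P_s\|_2$ by elementary dimensional constants of order $\sqrt{A}$ (using that the rows of $P_s$ and the vector $\pid^{\txC}$ are probability vectors, e.g. $\|I_A - \ones_A(\pid^{\txC})^\top\|_F^2 = A-2+A\|\pid^{\txC}\|_2^2 \le 2A$ and $\|P_s\|_2\le\|P_s\|_F\le\sqrt A$), and estimate $\|(P^{\pib})^{d-1} - \ones_S\mu_{\pib}^\top\|_2$ via the spectral structure of $P^{\pib}$.

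For that last, decaying factor I would use the identity $(P^{\pib})^{d-1} - \ones_S\mu_{\pib}^\top = (P^{\pib} - \ones_S\mu_{\pib}^\top)^{d-1}$, which holds because $P^{\pib}$ and $\ones_S\mu_{\pib}^\top$ commute (both products equal $\ones_S\mu_{\pib}^\top$) and $(\ones_S\mu_{\pib}^\top)^2 = \ones_S\mu_{\pib}^\top$. The matrix $P^{\pib}-\ones_S\mu_{\pib}^\top$ has the spectrum of $P^{\pib}$ with the Perron eigenvalue $1$ shifted to $0$, so its spectral radius is $|\lambda_2(P^{\pib})|<1$, and raising it to the $(d-1)$-st power yields the decay $|\lambda_2(P^{\pib})|^{d-1}$ up to a dimension-dependent constant that I would absorb into the (deliberately loose) $S^2$ factor. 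Squaring and collecting all constants then gives the claimed $2\,\frac{A^2S^2\beta^2}{(1-\gamma)^2}\gamma^{2d}|\lambda_2(P^{\pib})|^{2(d-1)}$.

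I expect the main obstacle to be precisely this last estimate. For a general irreducible, aperiodic — hence not necessarily symmetric or even diagonalizable — stochastic matrix $M$ with $\rho(M)=|\lambda_2(P^{\pib})|$, one does \emph{not} have $\|M^{n}\|_2\le\rho(M)^n$ verbatim: Jordan blocks and ill-conditioned eigenvectors introduce polynomial-in-$n$ and dimension-dependent prefactors. Making the clean rate $|\lambda_2|^{d-1}$ rigorous requires either invoking (near-)diagonalizability of $P^{\pib}$ — for instance reversibility, so that $P^{\pib}$ is self-adjoint in the $\mu_{\pib}$-weighted inner product and the contraction is exact in that norm, at the cost of a $\mu_{\pib}$-dependent constant when converting back to the Euclidean/Frobenius norm — or else carefully bundling the sub-exponential corrections into the generous polynomial factor $A^2S^2$. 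Everything else reduces to routine norm manipulations.
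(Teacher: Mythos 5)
Your proposal follows the paper's skeleton — Lemma~\ref{lem:var_bound} to reduce to the Frobenius norm, the bound $\max_{s,a}Q^2\le(1-\gamma)^{-2}$, the closed form of Lemma~\ref{lem:analytic_gradC}, the annihilation $[I_A-\ones_A(\pid^{\txC})^\top]P_s\ones_S\mu_{\pib}^\top=0$, and the same elementary estimates $\|I_A-\ones_A(\pid^{\txC})^\top\|_F^2\le 2A$, $\|P_s\|_F^2\le A$ — but you organize the decaying factor differently. The paper expands $(P^{\pib})^{d-1}$ via the spectral decomposition $\ones_S\mu_{\pib}^\top+\sum_{i\ge2}\lambda_i^{d-1}v_iu_i^\top$ (assuming simple eigenvalues, with a pointer to a Jordan-decomposition argument for the general case), kills the first term with the projection, and bounds the remaining $S-1$ rank-one terms one by one, which is where its $(S-1)\le S$ factor and the clean $|\lambda_2|^{d-1}$ come from. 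You instead subtract $\ones_S\mu_{\pib}^\top$ inside the product using the same projection identity, invoke the (correct) commuting identity $(P^{\pib})^{d-1}-\ones_S\mu_{\pib}^\top=(P^{\pib}-\ones_S\mu_{\pib}^\top)^{d-1}$, and reduce to a norm bound on a power of the centered matrix. Your route is slightly cleaner structurally, but it concentrates all the difficulty in the single estimate $\|(P^{\pib}-\ones_S\mu_{\pib}^\top)^{d-1}\|\lesssim\mathrm{poly}(S)\,|\lambda_2|^{d-1}$.

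The obstacle you flag there is real, and your proposed fix of absorbing the corrections into the $d$-independent factor $A^2S^2$ does not work as stated: for a non-diagonalizable or ill-conditioned $P^{\pib}$ the prefactor is polynomial in $d$ (Jordan blocks) or depends on the eigenvector condition number, neither of which is controlled by $A^2S^2$. You should be aware, however, that the paper's own argument carries the same implicit assumption: in the appendix it bounds each term by $|\lambda_2|^{d-1}\|I_A-\ones_A\pi^\top\|_F\|P_s\|_F\|u_iv_i^\top\|_F$ and then silently sets $\|u_iv_i^\top\|_F\le 1$, which with the biorthogonal normalization $u_i^\top v_i=1$ amounts to a well-conditioned eigenbasis; the degenerate case is only deferred to the Jordan argument sketched in the proof outline. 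So under the same diagonalizability/conditioning hypothesis that the paper tacitly uses (e.g.\ your reversibility suggestion, or simple well-conditioned eigenvalues), your route recovers the stated constants, and your criticism applies equally to both proofs rather than marking a defect unique to yours.
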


We provide the full proof in Appendix~\ref{app:thm1proof}, and  briefly outline its essence here.
\begin{proof}[Proof outline]
Lemma~\ref{lem:var_bound} allows us to bound the variance using a direct bound on the gradient norm. 
The gradient is given in Lemma \ref{lem:analytic_gradC} as a product of three matrices, which we now study from right to left. The matrix $P^{\pib}$ is a row-stochastic matrix. Because the associated Markov chain is irreducible and aperiodic, it has a unique stationary distribution. This implies that $P^{\pib}$ has one and only one eigenvalue equal to $1;$ all others have magnitude strictly less than $1.$ Let us suppose that all these other eigenvalues have multiplicity $1$ (the general case with repeated eigenvalues can be handled via Jordan decompositions as in \citep[Lemma1]{pelletier1998almost}). Then, $P^{\pib}$ has the spectral decomposition
$
    P^{\pib} = \ones_S \mu^\top_{\pib} + \sum_{i = 2}^{S} \lambda_i v_i u_i^\top,
$
where $\lambda_i$ is the $i$-th eigenvalue of $P^{\pib}$ (ordered in descending order according to their magnitude) and $u_i$ and $v_i$ are the corresponding left and right eigenvectors, respectively, and therefore $(P^{\pib})^{d-1} = \ones_S \mu^\top_{\pib} + \sum_{i = 2}^{S} \lambda_i^{d-1} v_i u_i^\top.$
The second matrix in the gradient relation in Lemma~\ref{lem:analytic_gradC}, $P_s,$ is a rectangular transition matrix that translates the vector of all ones from dimension $S$ to $A:$ $P_s \ones_S = \ones_A.$
Lastly, the first matrix 
$\left[I_{A} -  \ones_A {(\pid^{\txC}})^\top \right]$ 
is a projection whose null-space includes
the vector $\ones_A,$ i.e., $\left[I_{A} -  \ones_A {(\pid^{\txC}})^\top \right] \ones_A = 0.$ 
Combining the three properties above when multiplying the three matrices of the gradient, it is easy to see that the first term in the expression for $(P^{\pib})^{d-1}$ gets canceled, and we are left with bounded summands scaled by $\lambda_i(P^{\pib})^{d-1}.$ Recalling that $|\lambda_i(P^{\pib})| <1$ and that $|\lambda_2| \geq |\lambda_3| \geq \dots$ for $i = 2, \dots, S,$ we obtain the desired result. 
\end{proof}

 It's important to note that \treepol\ does not reduce the gradient uniformly, which would be equivalent to simply decreasing the learning rate. Rather, the gradient is scaled differently along different eigenvectors, with scaling factors that depend on the MDP structure and behavior policy $\pi_b$. This non-uniform scaling allows the policy to continue learning effectively while reducing harmful variance. The empirical results in Section~\ref{sec: experiments} demonstrate that this variance reduction leads to faster, more stable convergence rather than slowing it down.

Theorem~\ref{thm:rate_result} guarantees that the variance of the gradient decays  with $d.$ More importantly, it also provides a novel insight for choosing the behavior policy $\pib$ as the policy that minimizes the absolute second eigenvalue of the $P^{\pib}.$ Indeed, the second eigenvalue of a Markov chain relates to its connectivity and its rate of convergence to the stationary distribution \citep{levin2017markov}.

\textbf{Optimal variance decay}. For the strongest reduction in variance, the behavior policy $\pib$ should be chosen to achieve an induced Markov chain whose transitions are state-independent. In that case, $P^{\pib}$ is a rank one matrix of the form $\ones_S \mu_{\pib}^\top,$ and $\lambda_2(P^{\pib}) = 0.$ Then, $\Var\left(\nabla_\theta \log \pi_\theta(a|s) Q(s,a)\right) = 0.$ Naturally, 
this can only be done for pathological MDPs; see Appendix~\ref{sec: zero grad} for a more detailed discussion. Nevertheless, as we show in Section~\ref{sec: deep rl implementation}, we choose our tree expansion policy to reduce the variance as best as possible.

\textbf{Worst-case variance decay}. In contrast, and somewhat surprisingly, when $\pib$ is chosen so that the dynamics is deterministic, there is no guarantee that it will decay exponentially fast. For example, if $P^{\pib}$ is a permutation matrix, then $\lambda_2(P^{\pib}) = 1,$ and advancing the tree amounts to only updating the gradient of one state for every action, as in the basic softmax.

\subsection{Variance of \treepolE}
The proof of the variance bound for \treepolE\ is similar to that of \treepolC, but more involved. It also requires the assumption that the reward depends only on the state, i.e. $r(s,a)\equiv r(s).$ This is indeed the case in most standard RL environments such as Atari and Mujoco. 

\begin{lemma}[Vector form of \treepolE]\label{def:pidE}
For $d\geq1$, \eqref{eq:polE} is given by
\begin{equation}\label{eq:SoftTreeMax_matrix2}
    \pid^{\txE}(\cdot|s)= \frac{\Esd \exp(\beta \Theta)}{1_A^\top  \Esd \exp(\beta  \Theta)}, 
    \end{equation}
    where
    \begin{equation*}
    \Esd = P_s  \prod_{h=1}^{d-1} \left(D\left(\exp(\beta \gamma^{h-d} R)\right) P^{\pib} \right).
\end{equation*}
The vector $R$ above is the $S$-dimensional vector whose $s$-th coordinate is $r(s).$
\end{lemma}
The matrix $\Esd \in \bR^{A \times S}$ has a similar role to $\Csd$ from \eqref{eq:SoftTreeMax_matrix}, but it represents the exponentiated cumulative discounted reward. Accordingly, it is a product of $d$ matrices as opposed to a sum.  It captures the expected reward sequence starting from $s$ and then iteratively following $P^{\pib}.$ After $d$ steps, we apply the score function on the last state as in \eqref{eq:SoftTreeMax_matrix2}.

\begin{lemma}[Gradient of \treepolE]\label{lem:analytic_gradE}
The \treepolE\ gradient is given by
\begin{align*}
& \nabla_\theta\log \pid^{\txE} = \beta\left[I_{A} - \ones_A (\pid^{\txE})^\top \right] \times \\ 
& \frac{ D\left(\pid^{\txE}\right)^{-1} \Esd D(\exp(\beta \Theta))}{{\bf{1}}^{\top}_A \Esd \exp(\beta \Theta)}\quad \in \quad \bR^{A \times S},
\end{align*}
where for brevity, we drop the $s$ index in the policy above, i.e., ${\pid^{\txE} \equiv \pid^{\txE}(\cdot|s).}$
\end{lemma}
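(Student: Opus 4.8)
The plan is to differentiate the closed form of $\pid^{\txE}$ given in Lemma~\ref{def:pidE} directly. The key structural observation is that the matrix $\Esd$ in \eqref{eq: Esd} is assembled only from rewards and the transition kernels $P_s$ and $P^{\pib}$, so it carries no dependence on the parameter vector $\Theta$; hence all the $\theta$-dependence of $\pid^{\txE}(\cdot|s)$ is concentrated in the single factor $\exp(\beta\gamma^d\Theta)$, and the derivative reduces to a matrix chain rule through a diagonal exponential map. Concretely, I would introduce the shorthand $w \equiv \exp(\beta\gamma^d\Theta)\in\bR^S$, the unnormalized numerator $g \equiv \Esd w\in\bR^A$, and the scalar normalizer $Z\equiv\ones_A^\top\Esd w = \ones_A^\top g$, so that $\pid^{\txE}(\cdot|s)=g/Z$ and, componentwise, $\log\pid^{\txE}(a|s) = \log g_a - \log Z$.

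First I would compute the innermost derivative: $\partial w_{s'}/\partial\theta(s'')$ equals $\beta\gamma^d w_{s'}$ when $s'=s''$ and vanishes otherwise, i.e. the Jacobian of $w$ is $\beta\gamma^d D(w)$. Propagating this through $g=\Esd w$ gives the $A\times S$ matrix $\nabla_\theta\log g = \beta\gamma^d D(g)^{-1}\Esd D(w)$, and propagating through $Z=\ones_A^\top g$ gives the $1\times S$ row vector $\nabla_\theta\log Z = \beta\gamma^d Z^{-1}\ones_A^\top\Esd D(w)$. Then I would assemble $\nabla_\theta\log\pid^{\txE} = \nabla_\theta\log g - \ones_A\,(\nabla_\theta\log Z)$, where the outer product $\ones_A(\nabla_\theta\log Z)$ simply broadcasts the log-normalizer row to all $A$ rows of the gradient matrix.

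The final step is to recognize the projection form of the answer. Since $g = Z\,\pid^{\txE}$ we have $D(g)^{-1} = Z^{-1}D(\pid^{\txE})^{-1}$, so $\nabla_\theta\log g = \beta\gamma^d Z^{-1} D(\pid^{\txE})^{-1}\Esd D(w)$, which already matches the un-projected part of the claimed expression once we note $Z = \ones_A^\top\Esd\exp(\beta\gamma^d\Theta)$ and $D(w)=D(\exp(\beta\gamma^d\Theta))$. It then remains to check that subtracting the broadcast log-normalizer is exactly left-multiplication by the projection $I_A - \ones_A(\pid^{\txE})^\top$: one computes $\ones_A(\pid^{\txE})^\top\nabla_\theta\log g = \beta\gamma^d Z^{-1}\ones_A\big((\pid^{\txE})^\top D(\pid^{\txE})^{-1}\big)\Esd D(w)$, and since $(\pid^{\txE})^\top D(\pid^{\txE})^{-1} = \ones_A^\top$ this collapses to $\beta\gamma^d Z^{-1}\ones_A\ones_A^\top\Esd D(w) = \ones_A(\nabla_\theta\log Z)$, yielding $\nabla_\theta\log\pid^{\txE} = [I_A - \ones_A(\pid^{\txE})^\top]\nabla_\theta\log g$, which is the claim.

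The hard part will not be any single idea but the dimensional bookkeeping: keeping straight that $\nabla_\theta\log\pid^{\txE}$ is an $A\times S$ matrix whose $a$-th row is the gradient for action $a$, that $\nabla_\theta\log Z$ is a single row that must be replicated via $\ones_A(\cdot)$ rather than added as a vector, and that the diagonal matrices have different sizes ($A$ for $D(g)$ and $D(\pid^{\txE})$, $S$ for $D(w)$). Unlike Theorem~\ref{thm:rate_result}, this lemma has no spectral or probabilistic content; and the assumption $r(s,a)\equiv r(s)$ enters only through the vector form of Lemma~\ref{def:pidE}, which I would invoke as given.
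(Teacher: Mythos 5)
Your proposal is correct and follows essentially the same route as the paper: both differentiate $\log\pid^{\txE}(a|s)=\log[(\Esd)_a^\top\exp(\beta\gamma^d\Theta)]-\log[\ones_A^\top\Esd\exp(\beta\gamma^d\Theta)]$ using the fact that $\Esd$ is $\Theta$-free, the only difference being that the paper works entrywise in $(j,k)$ and then reassembles the matrix form $\beta\gamma^d\left[D(\pid^{\txE})^{-1}-\ones_A\ones_A^\top\right]\frac{\Esd D(\exp(\beta\gamma^d\Theta))}{\ones_A^\top\Esd\exp(\beta\gamma^d\Theta)}$, while you carry out the same chain rule in matrix notation and use the identity $(\pid^{\txE})^\top D(\pid^{\txE})^{-1}=\ones_A^\top$ to recover the projection form. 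Your bookkeeping of dimensions and the normalizer broadcast matches the paper's argument, so there is no gap.
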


This gradient structure is harder to handle than that of \treepolC\ in Lemma~\ref{lem:analytic_gradC}, but here we also can bound the decay of the variance nonetheless.
\begin{theorem}[Variance decay of \treepolE]\label{thm:rate_result2}
There exists $\alpha \in \left(0,1\right)$ such that,
\begin{equation*}
\Var\left(\nabla_\theta \log \pid^{\txE}(a|s) Q(s,a)\right) \in \mathcal{O}\left(\beta^2  \alpha^{2d}\right),
\end{equation*}
for every $Q.$ Further, if $P^{\pib}$ is reversible or if the reward is constant,
then $\alpha = |\lambda_2(P^{\pib})|$.
\end{theorem}

\textbf{Theory versus Practice.} We demonstrate the above result in simulation. We draw a random finite MDP, parameter vector $\Theta \in \bR^S_+,$ and behavior policy $\pib.$ We then empirically compute the PG variance of \treepolE\ as given in \eqref{eq:grad_var} and compare it to $|\lambda_2(P^{\pib})|^d.$ We repeat this experiment three times for different $P^{\pib}:$ (i) close to uniform, (ii) drawn randomly, and (iii) close to a permutation matrix. As seen in Figure \ref{fig: toy simulation}, the empirical variance and our bound match almost identically. 
This also suggests that $\alpha = |\lambda_2(P^{\pib})|$  in the general case and not only when $P^{\pib}$ is reversible or when the reward is constant.

\begin{figure}
  \begin{minipage}[c]{0.5\textwidth}
    \includegraphics[width=1.1\columnwidth]{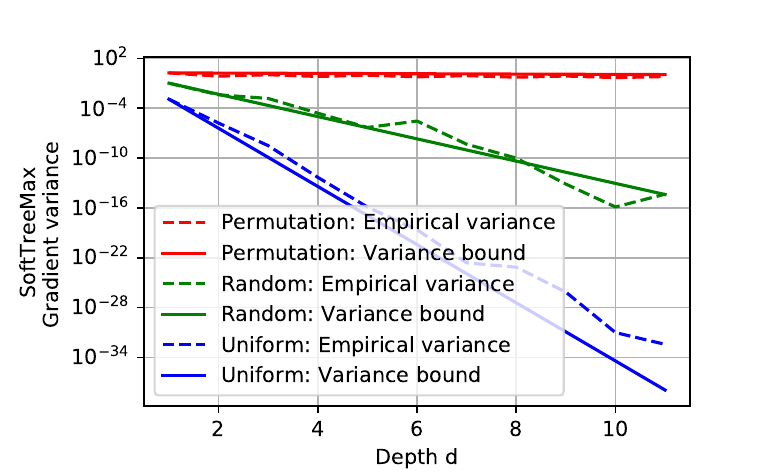}
  \end{minipage}\hfill
  \begin{minipage}[c]{0.45\textwidth}
    \caption{A comparison of the empirical PG variance and our bound for \treepolE\ on randomly drawn MDPs. We present three cases for $P^{\pib}:$ (i) close to uniform, (ii) drawn randomly, and (iii) close to a permutation matrix. This experiment verifies the optimal and worse-case rate decay cases. The variance bounds here are taken from Theorem~\ref{thm:rate_result2} where we substitute $\alpha=|\lambda_2(P^{\pib})|.$ To account for the constants, we match the values for the first point in $d=1.$}
    \label{fig: toy simulation}
  \end{minipage}
\end{figure}

\subsection{Bias with an Approximate Forward Model}

The definition of the two \treepol{} variants involves the knowledge of the underlying environment, in particular the value of $P$ and $r.$ However, in practice, we often can only learn approximations of the dynamics from interactions, e.g., using NNs \citep{ha2018world, schrittwieser2020mastering}. Let $\hat{P}$ and $\h{r}$ denote the approximate kernel and reward functions, respectively. In this section, we study the consequences of the approximation error on the \treepolC{}  gradient. 

Let $\pidhat^{\txC}$ be the \treepolC{} policy defined given the approximate forward model introduced above. That is, let $\pidhat^{\txC}$ be defined exactly as in  \eqref{eq:SoftTreeMax_matrix}, but using $\h{R}_s, \h{P}_s, \h{R}_{\pi_b}$ and $\h{P}^{\pi_b},$ instead of their unperturbed counterparts from Section~\ref{sec:preliminaries}. Then, the variance of the corresponding gradient again decays exponentially with a decay rate of $\lambda_2(\h{P}^{\pib}).$  However, a gradient bias is introduced. In the following, we bound this bias in terms of the approximation error and other problem parameters. The proof is provided in Appendix \ref{app:biasest}.

\begin{theorem}\label{thm:bias}
Let $\epsilon$ be the maximal model mis-specification, i.e., let 
$\max\{\|P - \hP\|, \|r- \hr\|\} = \epsilon.$ 
Then the policy gradient bias due to $\pidhat^{\txC}$ satisfies  
\begin{align} \label{eq: bias bound}
    \left\|\frac{\partial}{\partial \theta}  \left(\nu^\top V^{\pid^{\txC}}\right) -  \frac{\partial}{\partial \theta}  \left(\nu^\top V^{\pidhat^{\txC}}\right) \right\| = \quad \quad \quad & \\ 
    \mathcal{O}\left(\frac{1}{(1-\gamma)^2 \gamma^d} S \beta^2 d  \epsilon \right). & \nonumber
\end{align}
\end{theorem}


\begin{proof}[Proof outline]

First, we prove that $\max\{\|R_s - \h{R}_s\|,\|P_s - \h{P}_s\|,\|R_{\pi_b} - \h{R}_{\pi_b}\|,\|P^{\pi_b} - \h{P}^{\pi_b}\|\} = \mathcal{O}(\epsilon).$ This follows from the fact that the differences above are suitable convex combinations of either the rows of $P-\h{P}$ or $r-\h{r}.$ 
We use the above observation along with the definitions of $\pid^{\txC}$ and $\pidhat^{\txC}$ given in \eqref{eq:SoftTreeMax_matrix} to show that $\|\pid^{\txC} - \pidhat^{\txC}\| = O(\beta d \epsilon).$ The proof for the latter builds upon two key facts: (a) $\|(P^{\pib})^k - (\hat{P}^{\pib})^k\| \leq \sum_{h = 1}^k \|\h{P}^{\pib}\|^{h - 1} \|\h{P}^{\pib} - P^{\pib} \| \|p^{\pib}\|^{k - h} =   O(k \epsilon)$ for any $k \geq 0$, and (b)  $|e^x - 1| = O(x)$ as $x \to 0.$ Next, we decompose the LHS of \eqref{eq: bias bound} to get
\begin{align*}
 &\sum_{s} \left( \prod_{i = 1}^4 X_i(s) - \prod_{i = 1}^4 \h{X}_i(s) \right) 
       = \\
  &     \sum_s \sum_{i = 1}^4 \h{X}_1(s) \cdots \h{X}_{i - 1}(s) \left(X_i(s) - \h{X}_i(s)\right) \times \\
   & \quad \quad \quad \quad \quad\quad \quad \quad  X_{i + 1}(s) \cdots X_4(s),
\end{align*}
where $X_1(s) = d_{\pid^{\txC}}(s) \in \bR,$ $X_2(s) = (\nabla_\theta\log\pid^{\txC}(\cdot|s))^\tr \in \bR^{S \times A},$ $X_3(s) =  D(\pid^{\txC}(\cdot|s)) \in \bR^{A \times A},$ $X_4(s) = Q^{\pid^{\txC}}(s, \cdot) \in \bR^{A \times A},$ and $\h{X}_1(s), \ldots, \h{X}_4(s)$ are similarly defined with $\pid^{\txC}$ replaced by $\pidhat^{\txC}.$ Then, we show that, for $i = 1, \ldots, 4,$ (i) $\|X_i(s) - \h{X}_i(s)\| = O(\epsilon)$ and (ii) $\max\{\|X_i\|,\|\h{X}_i\|\}$ is bounded by problem parameters. From this, the desired result follows.
\end{proof}

To the best of our knowledge, Theorem \ref{thm:bias} is the first result that bounds the bias of the gradient of a parametric policy due to an approximate model. This theorem reveals an intriguing trade-off in \treepol: while the variance decays exponentially with tree depth $d$ as $O(|\lambda_2(P^{\pi_b})|^{d-1})$ according to Theorem \ref{thm:rate_result}, the gradient bias with an approximate model grows as $O(d\gamma^{-d})$. Since $\gamma^{-d}$ grows exponentially with $d$, there exists an optimal tree depth that balances these opposing effects.

The bias also depends on the temperature parameter $\beta$, with higher temperature (lower $\beta$) reducing bias at the expense of more exploratory policies. In the extreme case of $\beta=0$, the policy becomes uniform with no bias. Additionally, the error scales linearly with $d$ because the policy suffers from cumulative error as it relies on further-looking states in the approximate model.

These results suggest that if the learned model is accurate enough, we can expect similar convergence properties for \treepolC{} as we would obtain with the true dynamics. In practice, one should adjust both $d$ and $\beta$ based on the estimated accuracy of the forward model to achieve the best performance. This is particularly important for practitioners implementing \treepol{} with learned forward models.

\section{\treepol: Deep Parallel Implementation}
\label{sec: deep rl implementation}

\begin{figure*}[!ht]
\centering
\includegraphics[width=1.2\columnwidth]{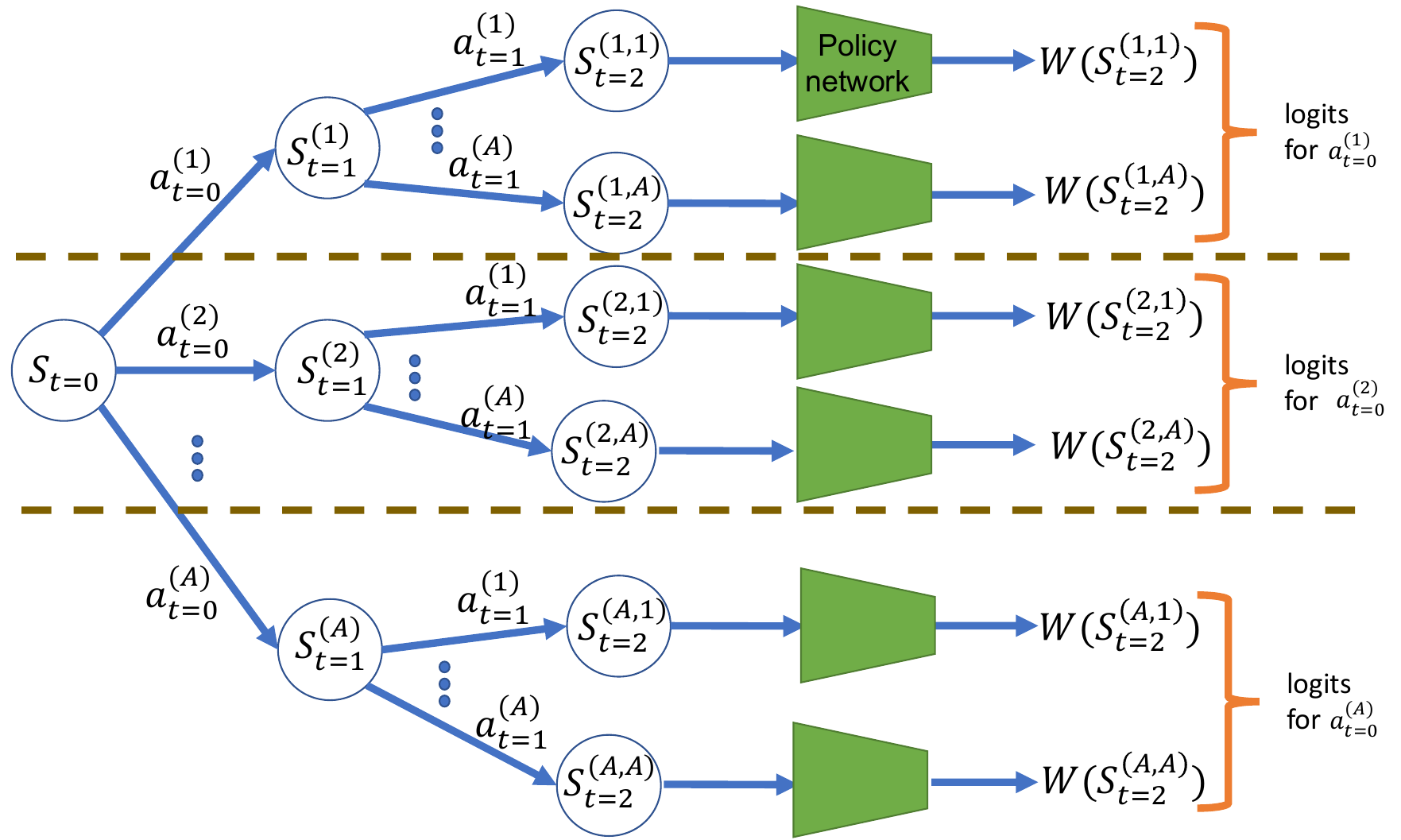}
  \caption{\textbf{\treepol{} policy}. Our exhaustive parallel tree expansion iterates on all actions at each state up to depth $d$ ($=2$ here). The leaf state of every trajectory is used as input to the policy network.  The output is then added to the trajectory's cumulative reward as described in \eqref{eq:logit}. I.e., instead of the standard softmax logits, we add the cumulative discounted reward to the policy network output. This policy is differentiable and can be easily integrated into any PG algorithm. In this work, we build on PPO and use its loss function to train the policy network.  }  
  \label{fig:policy_diagram}
\end{figure*}

Following impressive successes of deep RL \citep{mnih2015human, silver2016mastering}, using deep NNs in RL is standard practice. Depending on the RL algorithm, a loss function is defined and gradients on the network weights can be calculated.
In PG methods, the scoring function used in the softmax is commonly replaced by a neural network $W_\theta$: $\pi_\theta(a|s) \propto \exp\left(W_\theta(s,a)\right).$ Similarly, we implement \treepol{} by replacing $\theta(s)$ in \eqref{eq:logit} with a neural network $W_\theta(s)$. Although both variants of \treepol\ from Section~\ref{sec: softtreemax} involve computing an expectation, this can be hard in general. One approach to handle it is with sampling, though these introduce estimation variance into the process. We leave the question of sample-based theory and algorithmic implementations for future work.

Instead, in finite action space environments such as Atari, we compute the exact expectation in \treepol\ with an exhaustive TS of depth $d$. Despite the exponential computational cost of spanning the entire tree, recent advancements in parallel GPU-based simulation allow efficient expansion of all nodes at the same depth simultaneously \citep{dalal2021improve,rosenberg2022planning}. This is possible when a simulator is implemented on GPU \citep{dalton2020accelerating, makoviychuk2021isaac, freeman2021brax}, or when a forward model is learned \citep{kim2020learning,ha2018world}. To reduce the complexity to be linear in depth, we apply tree pruning to a limited width in all levels. We do so by sub-sampling only the most promising branches at each level. Limiting the width drastically improves runtime, and enables respecting GPU memory limits, with only a small sacrifice in performance.

To summarize, in the practical $\treepol{}$ algorithm we perform an exhaustive tree expansion with pruning to obtain trajectories up to depth $d.$ We expand the tree with equal weight to all actions, which corresponds to a uniform tree expansion policy $\pib.$ We apply a neural network on the leaf states, and accumulate the result with the rewards along each trajectory to obtain the logits in \eqref{eq:logit}. Finally, we aggregate the results using \treepolC. We leave experiments \treepolE\ for future work on risk-averse RL. 

For a detailed illustration of our GPU-based tree expansion implementation, see Appendix~\ref{app:tree_expansion}. In addition, we provide the psudeocode for C-SoftTreeMax in Algorithm~\ref{alg:SoftTreeMax} and its integration with PPO in Algorithm~\ref{alg:SoftTreeMaxPPO} in Appendix~\ref{app:algorithms}. During training, the gradient propagates to the NN weights of $W_\theta.$ When the gradient $\nabla_\theta \log \pid$ is calculated at each time step, it updates $W_\theta$ for all leaf states, similarly to Siamese networks \citep{bertinetto2016fully}. An illustration of the policy is given in Figure \ref{fig:policy_diagram}.

\section{Experiments}
\label{sec: experiments}
We conduct our experiments on multiple games from the Atari simulation suite \citep{bellemare2013arcade}. As a baseline, we train a PPO \citep{schulman2017proximal} agent with $256$ GPU workers in parallel \citep{dalton2020accelerating}. For the tree expansion, we employ a GPU breadth-first as in \citep{dalal2021improve}. We then train \treepolC{} \footnote{We also experimented with \treepolE\ and the results were almost identical. This is due to the quasi-deterministic nature of Atari, which causes the trajectory logits \eqref{eq:logit} to have almost no variability. We encourage future work on E-SoftTreeMax using probabilistic environments that are risk-sensitive.} for depths $d=1 \dots 8,$ with a single worker. For depths $d \geq 3$, we limited the tree to a maximum width of $1024$ nodes and pruned trajectories with low estimated weights. Since the distributed PPO baseline advances significantly faster in terms of environment steps, for a fair comparison, we ran all experiments for one week on the same machine. For more details see Appendix \ref{app:experiments}.

In Figure~\ref{fig:variance_curves}, we plot the reward and variance of \treepol\ for each game, as a function of depth. The dashed lines are the results for PPO. Each value is taken after convergence, i.e., the average over the last $20\%$ of the run. The numbers represent the average over five seeds per game. 
The plot conveys three intriguing conclusions. First, in all games, \treepol\ achieves significantly higher reward than PPO. Its gradient variance is also orders of magnitude lower than that of PPO. Second, the reward and variance are negatively correlated and mirror each other in almost all games. This phenomenon demonstrates the necessity of reducing the variance of PG for improving performance. 
Lastly, each game has a different sweet spot in terms of optimal tree depth. Recall that we limit the run-time in all experiments to one week 
The deeper the tree, the slower each step and the run consists of less steps. This explains the non-monotone behavior as a function of depth. 
For a more thorough discussion on the sweet spot of different games, see Appendix~\ref{sec: step based plots}.

\begin{figure*}[!htb]
\includegraphics[scale=0.48]{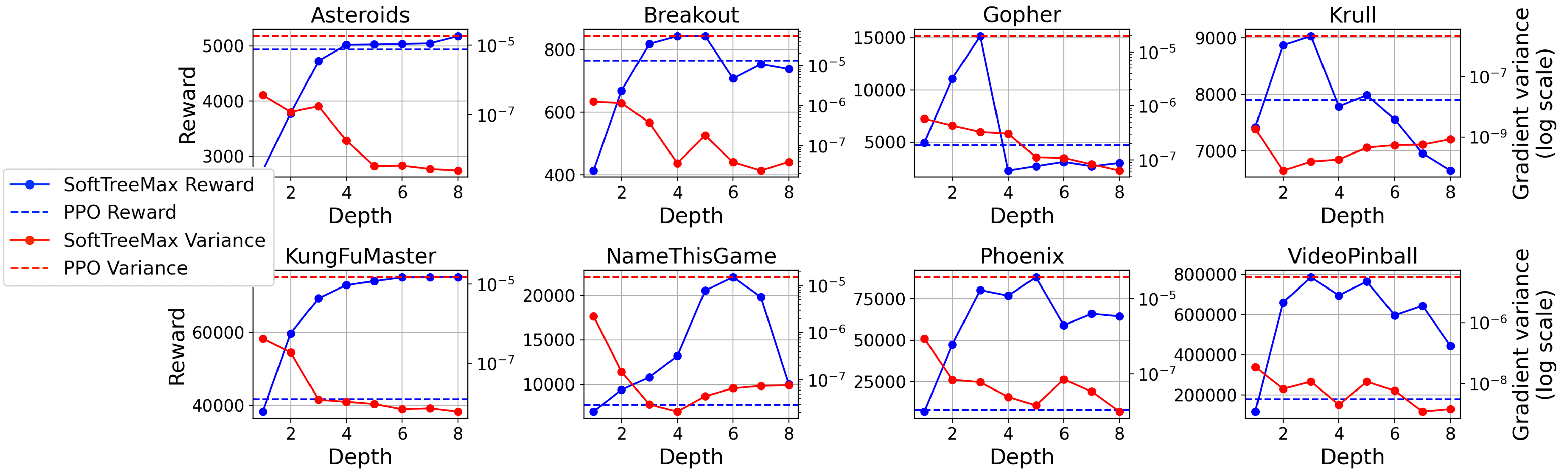}
   \caption{\textbf{Reward and Gradient variance: GPU \treepol{} 
   (single worker) vs PPO ($\bf{256}$ GPU workers).} The blue reward plots show the average  of $50$ evaluation episodes. The red variance plots show the average gradient variance of the corresponding training runs, averaged over five seeds. The dashed lines represent the same for PPO. Note that the variance y-axis is in log-scale.
   }  \label{fig:variance_curves}
\end{figure*}

\section{Related Work}

Reducing variance in PG estimates is essential for improving efficiency and stability. Approaches include baseline subtraction \citep{sutton1999policy, weaver2001optimal}, action-dependent baselines \citep{wu2018variance}, sub-sampling techniques like SVRPG \citep{papini2018stochastic}, and natural policy gradient \citep{kakade2001natural}. Multi-step returns for value estimation, such as n-step TD \citep{sutton1998reinforcement} and GAE \citep{schulman2015high}, primarily affect learning targets rather than policy parameterization. Our approach differs by integrating planning directly into the policy structure, making it fundamentally different from both variance reduction techniques and multi-step return methods.

\textbf{Softmax Operator.}
The softmax policy became a canonical part of PG to the point where theoretical results of PG focus specifically on it \citep{zhang2021convergence, mei2020global,li2021softmax, ding2022global}. Even though we focus on a tree extension to the softmax policy, our methodology is general and can be easily applied to other discrete or continuous parameterized policies as in \citep{mei2020escaping, miahi2021resmax, silva2019optimization}. 
It's important to distinguish between tree search and our approach of tree expansion. Traditional tree search methods like MCTS identify the best trajectory through a selection process, while our approach explores all possible trajectories up to a certain depth to compute an improved policy. Recent work by \citet{efroni2018beyond} showed that multi-step greedy policies improve convergence rates, and \citet{protopapas2024policy} combined policy mirror descent with lookahead planning, though using different mechanisms than our SoftTreeMax policy.

\textbf{Parallel Environments.}
 In this work we used accurate parallel models that are becoming more common with the increasing popularity of GPU-based simulation \citep{makoviychuk2021isaac, dalton2020accelerating, freeman2021brax}. Alternatively, in relation to Theorem~\ref{thm:bias}, one can rely on recent works that learn the underlying model \citep{ha2018world, schrittwieser2020mastering} and use an approximation of the true dynamics. 
\textbf{Risk Aversion.}
Previous work considered exponential utility functions for risk aversion \citep{chen2007risk, garcia2015comprehensive,fei2021exponential}. This utility function is the same as \treepolE{} formulation from \eqref{eq:polE}, but we have it directly in the policy instead of the objective.  
\textbf{Reward-free RL.}
We showed that the gradient variance is minimized when the transitions induced by the behavior policy $\pib$ are uniform. This is expressed by the second eigenvalue of the transition matrix $P^{\pib}$. This notion of uniform exploration is common to the reward-free RL setup \citep{jin2020reward}. Several such works also considered the second eigenvalue in their analysis \citep{liu2018simple, tarbouriech2019active}.

\section{Discussion and Future Work}

In this work, we introduced for the first time a differentiable parametric policy that combines TS with PG. We proved that \treepol{} is essentially a variance reduction technique and explained how to choose the expansion policy to minimize the gradient variance. It is an open question whether optimal variance reduction corresponds to the appealing regret properties the were put forward by UCT \citep{kocsis2006bandit}. We believe that this can be answered by analyzing the convergence rate of \treepol, relying on the bias and variance results we obtained here.

As the learning process continues, the norm of the gradient and the variance {\em both} become smaller. On the face of it, one can ask if the gradient becomes small as fast as the variance or even faster can there be any meaningful learning? As we showed in the experiments, learning happens because the variance reduces fast enough (a variance of 0 represents deterministic learning, which is fastest).

Our work can be extended to infinite action spaces, where the theoretical analysis would involve transition kernels rather than matrices while preserving the same non-expansive operator properties. For implementation, the tree of continuous actions can be expanded by maintaining a parametric distribution over actions depending on $\theta$, similar to a tree adaptation of MPPI \citep{williams2017information} with a value function. This would significantly broaden the applicability of SoftTreeMax to important domains like robotics and continuous control.

Further important extensions include learning the forward model from interactions \citep{ha2018world, schrittwieser2020mastering} rather than using the true dynamics. Our analysis in Theorem~\ref{thm:bias} already provides theoretical guidance on the resulting gradient bias, but empirical validation with learned models would bridge the gap between model-based approaches like MuZero and policy gradient methods. Additionally, adapting the behavior policy $\pi_b$ over time as learning progresses could lead to more relevant exploration, though this requires careful management to maintain variance reduction benefits. Finally, the E-SoftTreeMax variant offers potential applications in risk-sensitive RL by naturally emphasizing exceptional trajectories through its reward exponentiation structure.

\section*{Impact Statement} 
This paper presents work whose goal is to advance the field of Machine Learning. There are many potential societal consequences of our work, none which we feel must be specifically highlighted here.

\section*{Reproducibility and Limitations}
The code for our implementation is available at \newline {\url{https://github.com/NVlabs/SoftTreeMax}}. We provide a docker file for setting up the environment and a README file with instructions on how to run both training and evaluation. The environment engine is an extension of Atari-CuLE \citep{dalton2020accelerating}, a CUDA-based Atari emulator that runs on GPU. Our usage of a GPU environment is both a novelty and a current limitation of our work.

There are additional limitations to consider. First, our analysis focuses on MDPs with finite state and action spaces, and while we discuss theoretical extensions to continuous domains, practical implementations would require careful design choices to manage the sampling process efficiently. Second, while SoftTreeMax can be applied to any environment, it provides most benefit in quasi-deterministic environments where accurate forward planning is possible. In highly stochastic environments, the exponential growth of the tree width with depth would present challenges even with pruning.

\bibliography{SoftTreeMaxBib}
\bibliographystyle{icml2025}

\newpage
\appendix
\onecolumn
\section*{Appendix}
\section{Proofs} \label{app:proofs}

\subsection{Proof of Lemma \ref{lem:var_bound} -- Bound on the policy gradient variance}
For any parametric policy $\pi_\theta$ and function $Q:\cS\times\cA \rightarrow \bR,$ 
\begin{align} \nonumber
\Var\left(\nabla_\theta\log \pi_\theta(a|s) Q(s,a)\right) \leq \max_{s,a} \left[Q(s,a)\right]^2   \max_s  \| \nabla_\theta\log \pi_\theta(\cdot|s)\|_F^2,
\end{align}
where $\nabla_\theta\log \pi_\theta(\cdot|s) \in \mathbb{R}^{A  \times \dim(\theta)}$ is a matrix whose $a$-th row is $\nabla_\theta\log \pi_\theta(a|s)^\top$.

\begin{proof}
The variance for a parametric policy $\pi_\theta$ is given as follows:
\begin{align*}
\Var\left(\nabla_\theta\log \pi_\theta(a|s) Q(a,s)\right) 
    =& \mathbb{E}_{s\sim d_{\pi_\theta},a\sim \pi_\theta(\cdot|s)}\left[\nabla_\theta\log \pi_\theta(a|s)^\top \nabla_\theta\log \pi_\theta(a|s) Q(s,a)^2\right] - \\ 
    & \quad\mathbb{E}_{s\sim d_{\pi_\theta},a\sim \pi_\theta(\cdot|s)}\left[\nabla_\theta\log \pi_\theta(a|s)Q(s,a)\right]^\top \mathbb{E}_{s\sim d_{\pi_\theta},a\sim \pi_\theta(\cdot|s)}\left[\nabla_\theta\log \pi_\theta(a|s)Q(s,a)\right],
\end{align*}
where $Q(s, a)$ is the currently estimated Q-function and $d_{\pi_\theta}$ is the discounted state visitation frequency induced by the policy $\pi_\theta$. Since the second term we subtract is always positive (it is of quadratic form $v^\top v$) we can bound the variance by the first term:

\begin{align*}
\Var\left(\nabla_\theta\log \pi_\theta(a|s) Q(a,s)\right) 
    \leq & \mathbb{E}_{s\sim d_{\pi_\theta},a\sim \pi_\theta(\cdot|s)} \left[\nabla_\theta\log \pi_\theta(a|s)^\top \nabla_\theta\log \pi_\theta(a|s) Q(s,a)^2\right] \\
    =&  \sum_{s}d_{\pi_\theta}(s) \sum_{a} \pi_\theta(a|s) \nabla_\theta\log \pi_\theta(a|s)^\top \nabla_\theta\log \pi_\theta(a|s) Q(s,a)^2 \\
    \leq & \max_{s,a} \left[ \left[Q(s,a)\right]^2 \pi_\theta(a| s)\right] \sum_{s}d_{\pi_\theta}(s) \sum_{a}\nabla_\theta\log \pi_\theta(a|s)^\top \nabla_\theta\log \pi_\theta(a|s) \\
    \leq & \max_{s,a} \left[Q(s,a)\right]^2  \max_s  \sum_{a}\nabla_\theta\log \pi_\theta(a|s)^\top \nabla_\theta\log \pi_\theta(a|s) \\    
    = & \max_{s,a}  \left[Q(s,a)\right]^2  \max_s  \| \nabla_\theta\log \pi_\theta(\cdot|s)\|_F^2.
\end{align*}
\end{proof}

\subsection{Proof of Lemma \ref{def:pidC} -- Vector form of \treepolC }
In vector form, \eqref{eq:polC} is given by
\begin{equation}
    \pid^{\txC}(\cdot|s) = \frac{ \exp\left[\beta \left(\Csd + P_s \left(P^{\pib}\right)^{d-1} \Theta \right)\right]}{\ones_A^\top \exp\left[\beta \left(\Csd + P_s \left(P^{\pib}\right)^{d-1} \Theta \right)\right]}, 
\end{equation}
where
\begin{equation} 
\Csd = \gamma^{-d} R_s + P_s \left[\sum_{h=1}^{d-1} \gamma^{h - d} \left(P^{\pib}\right)^{h - 1} \right] R_{\pib}.
\end{equation}

\begin{proof}
Consider the vector $\ell_{s, \cdot} \in \bR^{|\cA|}.$ Its expectation satisfies
\begin{align*}
\bE^{\pib} \ell_{s, \cdot}(d;\theta) &=  \bE^{\pib} \left[\sum_{t=0}^{d-1} \gamma^{t-d} r_t + \theta(s_d) \right] \\ 
%
%
%
&= \gamma^{-d} R_s + \sum_{t= 1}^{d-1} \gamma^{t-d}  P_s (P^{\pi_b})^{t - 1} R_{\pib} + P_s (P^{\pib})^{d-1}  \Theta.
\end{align*}
As required.
\end{proof}

\subsection{Proof of Lemma \ref{lem:analytic_gradC} -- Gradient of \treepolC}

The \treepolC\ gradient of dimension $A \times S$ is given by
\begin{align*} 
 \nabla_\theta\log \pid^{\txC}
 =\beta\left[I_{A} -  \ones_A {(\pid^{\txC}})^\top \right]P_s \left(P^{\pib}\right)^{d-1},
\end{align*}
where for brevity, we drop the $s$ index in the policy above, i.e., ${\pid^{\txC} \equiv \pid^{\txC}(\cdot|s).}$
\begin{proof}
The $(j, k)$-th entry of $\nabla_\theta\log \pid^{\txC}$ 
satisifes
\begin{align*}
[\nabla_\theta\log \pid^{\txC}]_{j, k} 
&  = \frac{\partial \log (\pid^{\txC}(a^j|s))}{\partial \theta(s^k)} \\
&= \beta [P_s (P^{\pib})^{d-1}]_{j, k} - \frac{ \sum_a \left[\exp\left[\beta \left(\Csd + P_s \left(P^{\pib}\right)^{d-1} \Theta \right)\right]\right]_a \beta\left[P_s (P^{\pib})^{d-1}\right]_{a, k}}{\ones_A^\top \exp\left[\beta \left(\Csd +  P_s \left(P^{\pib}\right)^{d-1} \Theta \right)\right]} \\
&= \beta [P_s (P^{\pib})^{d-1}]_{j, k} - \beta \sum_a \pid^{\txC}(a|s)  \left[P_s (P^{\pib})^{d-1}\right]_{a, k} \\
&= \beta [P_s (P^{\pib})^{d-1}]_{j, k} - \beta \left[(\pid^{\txC})^\top P_s (P^{\pib})^{d-1}\right]_{k} \\
&= \beta [P_s (P^{\pib})^{d-1}]_{j, k} - \beta \left[\ones_A (\pid^{\txC})^\top P_s (P^{\pib})^{d-1}\right]_{j, k}.
\end{align*}
Moving back to matrix form, we obtain the stated result.
\end{proof}

\subsection{Proof of Theorem \ref{thm:rate_result} -- Exponential variance decay of \treepolC}\label{app:thm1proof}
The \treepolC\ policy gradient is bounded by
\begin{align*}
\Var \left(\nabla_\theta \log \pid^{\txC}(a|s) Q(s,a)\right)  \leq 2\frac{A^2 S^2\beta^2}{(1 - \gamma)^2} |\lambda_2(P^{\pib})|^{2(d-1)}.
\end{align*}
\begin{proof}
We use Lemma \ref{lem:var_bound} directly. First of all, it is know that when the reward is bounded in $[0,1]$, the maximal value of the Q-function is $\frac{1}{1-\gamma}$ as the sum as infinite discounted rewards. Next, we bound the Frobenius norm of the term achieved in Lemma \ref{lem:analytic_gradC}, by applying the eigen-decomposition on $P^{\pib}$:
\begin{equation}
P^{\pib} = \ones_S \mu^\top + \sum_{i=2}^S \lambda_i u_i v_i^\top,
\end{equation}
where $\mu$ is the stationary distribution of $P^{\pib}$, and $u_i$ and $v_i$ are left and right eigenvectors correspondingly.
\begin{align*}
\|\beta\left( I_{A,A} -  \ones_A \pi^\top \right)P_s (P^{\pib})^{d-1}\|_F  &= \beta\|\left( I_{A,A} -  \ones_A \pi^\top \right)P_s \left(\ones_S \mu^\top + \sum_{i=2}^S \lambda^{d-1}_i u_i v_i^\top \right)\|_F \\ 
\textit{(}P_s\textit{ is stochastic)} \quad & = \beta\|\left( I_{A,A} -  \ones_A \pi^\top \right) \left(\ones_A \mu^\top + \sum_{i=2}^S \lambda^{d-1}_i P_s u_i v_i^\top \right)\|_F \\
\textit{(projection nullifies }\ones_A \mu^\top\textit{)} \quad & = \beta\|\left( I_{A,A} -  \ones_A \pi^\top \right) \left(\sum_{i=2}^S \lambda^{d-1}_i P_s u_i v_i^\top \right)\|_F \\
\textit{(triangle inequality)} \quad& \leq \beta \sum_{i=2}^S\|\left( I_{A,A} -  \ones_A \pi^\top \right) \left( \lambda^{d-1}_i P_s u_i v_i^\top \right)\|_F \\
\textit{(matrix norm sub-multiplicativity)} \quad& \leq \beta |\lambda^{d-1}_2| \sum_{i=2}^S\| I_{A,A} -  \ones_A \pi^\top \|_F  \|P_s\|_F \|u_i v_i^\top \|_F \\
&= \beta |\lambda^{d-1}_2| (S-1) \| I_{A,A} -  \ones_A \pi^\top \|_F  \|P_s\|_F.
\end{align*}
Now, we can bound the norm $\| I_{A,A} -  \ones_A \pi^\top \|_F$ by direct calculation:
\begin{align}
\|I_{A,A} -  \ones_A \pi^\top\|_F^2 &= \Tr \left[ \left(I_{A,A} -  \ones_A \pi^\top \right) \left(I_{A,A} -  \ones_A \pi^\top \right)^\top \right] \\
&= \Tr \left[ I_{A,A} -  \ones_A \pi^\top - \pi \ones_A ^\top +  \pi^\top \pi \ones_A \ones_A^\top \right] \\
&= A -  1 - 1 + A \pi^\top \pi \\
&\leq 2A.
\end{align}
From the Cauchy-Schwartz inequality,
\begin{align*}
\|P_s\|_F^2 &= \sum_a \sum_s \left[[P_s]_{a,s} \right]^2 
= \sum_a \|[P_s]_{a, \cdot} \|_2^2 
 \leq \sum_a \|[P_s]_{a, \cdot} \|_1 \|[P_s]_{a, \cdot} \|_\infty \leq A.  
\end{align*}
So,
\begin{align*}
\Var \left(\nabla_\theta \log \pid^{\txC}(a|s) Q(s,a)\right)  & \leq \max_{s,a} \left[Q(s,a)\right]^2   \max_s  \| \nabla_\theta\log {\pid^{\txC}}(\cdot|s)\|_F^2 \\
&\leq \frac{1}{(1-\gamma)^2} \|\beta\left( I_{A,A} -  \ones_A \pi^\top \right)P_s (P^{\pib})^{d-1}\|^2_F \\
&\leq \frac{1}{(1-\gamma)^2}\beta^2  |\lambda_2(P^{\pib})|^{2(d-1)} S^2 (2A^2),
\end{align*}
which obtains the desired bound.
\end{proof}

\subsection{A lower bound on \treepolC\ gradient (result not in the paper)}
For completeness we also supply a lower bound on the Frobenius norm of the gradient. Note that this result does not translate to the a lower bound on the variance since we have no lower bound equivalence of Lemma \ref{lem:var_bound}. 
\begin{lemma}
The Frobenius norm on the gradient of the policy is lower-bounded by:
\begin{equation}
\| \nabla_\theta\log {\pid^{\txC}}(\cdot|s)\|_F \geq C \cdot \beta |\lambda_2(P^{\pib})|^{(d-1)}.
\end{equation}
\end{lemma}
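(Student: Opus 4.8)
The plan is to mirror the proof of Theorem~\ref{thm:rate_result} but keep track of a lower bound rather than an upper bound, which forces us to replace sub-multiplicativity and triangle-inequality steps with reverse estimates. Starting from Lemma~\ref{lem:analytic_gradC}, we have $\nabla_\theta\log \pid^{\txC} = \beta\gamma^d [I_A - \ones_A (\pid^{\txC})^\top] P_s (P^{\pib})^{d-1}$, and using the spectral decomposition $(P^{\pib})^{d-1} = \ones_S \mu_{\pib}^\top + \sum_{i=2}^S \lambda_i^{d-1} v_i u_i^\top$ together with $P_s \ones_S = \ones_A$ and $[I_A - \ones_A (\pid^{\txC})^\top]\ones_A = 0$, the rank-one stationary term is annihilated exactly as before, leaving
\begin{equation*}
\nabla_\theta\log \pid^{\txC} = \beta\gamma^d \left[I_A - \ones_A (\pid^{\txC})^\top\right] P_s \sum_{i=2}^S \lambda_i^{d-1} v_i u_i^\top .
\end{equation*}

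The key idea for the lower bound is to extract the $i=2$ term by a suitable projection: pick unit vectors that isolate $\lambda_2^{d-1} v_2 u_2^\top$ from the rest of the sum. Concretely, I would evaluate the Frobenius norm from below by $\|\nabla_\theta\log \pid^{\txC}\|_F \ge \|(\nabla_\theta\log \pid^{\txC})\, w\|_2$ for a cleverly chosen unit vector $w$, or more symmetrically by $\|\nabla_\theta\log \pid^{\txC}\|_F \ge |x^\top (\nabla_\theta\log \pid^{\txC}) w|$ for unit $x,w$. Choosing $w$ in the direction of $u_2$ (or, if the $u_i$ are not orthogonal, a vector biorthogonal to the $u_i$'s so that only the $i=2$ summand survives) collapses the sum to $\beta\gamma^d \lambda_2^{d-1} [I_A - \ones_A (\pid^{\txC})^\top] P_s v_2$, and choosing $x$ appropriately in the row space picks out a fixed fraction of this vector's norm. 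The constant $C$ then absorbs the (depth-independent) quantities $\|P_s v_2\|$, the norm of the biorthogonal vector, and the amount by which the projection $I_A - \ones_A (\pid^{\txC})^\top$ shrinks $P_s v_2$; one needs $P_s v_2$ to not lie in the null space $\mathrm{span}\{\ones_A\}$ of that projection, which holds generically and can be assumed as a nondegeneracy condition on the MDP.

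The steps, in order: (1) invoke Lemma~\ref{lem:analytic_gradC} and the spectral decomposition \eqref{eq: spectral decomp}; (2) cancel the $\ones_S\mu_{\pib}^\top$ term using stochasticity of $P_s$ and the projection property; (3) lower-bound $\|\cdot\|_F$ by testing against a unit vector $w$ biorthogonal to $\{u_i\}_{i\ge 2}$ chosen to kill all summands except $i=2$, reducing the expression to $\beta\gamma^d |\lambda_2|^{d-1}$ times a fixed nonzero vector; (4) lower-bound the remaining vector norm by a constant $C$ depending only on $P_s$, the eigenvectors, and the projection, under a genericity assumption; (5) conclude. The main obstacle is step (3)–(4): controlling the interaction between the (policy-dependent, hence $d$-dependent) projection $I_A - \ones_A(\pid^{\txC})^\top$ and the fixed vector $P_s v_2$ uniformly in $d$. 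Since $\pid^{\txC}$ varies with $d$, one must argue that the projection cannot asymptotically align its null space with $P_s v_2$; this is where either a compactness/continuity argument over the simplex of policies, or an explicit quantitative bound showing $\|[I_A - \ones_A \pi^\top] P_s v_2\|$ is bounded below over all $\pi \in \Delta_\cA$, is needed — and it is exactly here that the nondegeneracy hypothesis (that $P_s v_2 \notin \mathrm{span}\{\ones_A\}$) enters, after which $\inf_{\pi} \|[I_A - \ones_A \pi^\top] P_s v_2\| > 0$ follows from continuity on the compact simplex.
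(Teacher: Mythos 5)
Your proposal is correct and takes essentially the same route as the paper: the paper lower-bounds the Frobenius norm by the induced $\ell_2$ norm evaluated at the unit right eigenvector of $\lambda_2(P^{\pib})$ — which is precisely your biorthogonal test vector, since left and right eigenvectors are mutually biorthogonal — extracting $\beta\gamma^d|\lambda_2(P^{\pib})|^{d-1}\,\|(I_A-\ones_A \pi^\top)P_s v_2\|_2$ as the constant. You are in fact somewhat more careful than the paper on the two points it leaves informal: the correct nondegeneracy condition is $P_s v_2\notin\mathrm{span}\{\ones_A\}$ (the paper only remarks that $P_s u$ may vanish, missing that the projection also kills constant vectors), and the $d$-dependence of the projection through $\pid^{\txC}$, which your compactness argument over the simplex resolves while the paper simply absorbs it into $C$.
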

\begin{proof}
We begin by moving to the induced $l_2$ norm by norm-equivalence:
\begin{equation*}
\|\beta\left( I_{A,A} -  \ones_A \pi^\top \right)P_s (P^{\pib})^{d-1}\|_F \geq \|\beta\left( I_{A,A} -  \ones_A \pi^\top \right)P_s (P^{\pib})^{d-1}\|_2.
\end{equation*}
Now, taking the vector $u$ to be the eigenvector of the second eigenvalue of $P^{\pib}$:
\begin{align*}
\|\beta\left( I_{A,A} -  \ones_A \pi^\top \right)P_s (P^{\pib})^{d-1}\|_2 &\geq \|\beta\left( I_{A,A} -  \ones_A \pi^\top \right)P_s (P^{\pib})^{d-1} u\|_2 \\
&= \beta\|\left( I_{A,A} -  \ones_A \pi^\top \right)P_s  u\|_2 \\
&= \beta |\lambda_2(P^{\pib})|^{(d-1)} \|\left( I_{A,A} -  \ones_A \pi^\top \right)P_s  u\|_2.
\end{align*}
Note that even though $P_s u$ can be $0$, that is not the common case since we can freely change $\pib$ (and therefore the eigenvectors of $P^{\pib}$).
\end{proof}

\subsection{Proof of Lemma \ref{def:pidE} -- Vector form of \treepolE}
For $d\geq1$, \eqref{eq:polE} is given by
\begin{equation}
    \pid^{\txE}(\cdot|s)= \frac{\Esd \exp(\beta \Theta)}{1_A^\top  \Esd \exp(\beta  \Theta)},
\end{equation}
where 
%
%
\begin{equation}
\Esd = P_s  \prod_{h=1}^{d-1} \left(D\left(\exp[\beta \gamma^{h-d} R]\right) P^{\pib} \right)
\end{equation}
with $R$ being the $|S|$-dimensional vector whose $s$-th coordinate is $r(s).$
%
%
\begin{proof}
Recall that 
\begin{equation}
    \ell_{s, a}(d; \theta) = \gamma^{-d} \left[r(s) + \sum_{t = 1}^{d - 1} \gamma^t r(s_t) + \gamma^d \theta (s_d)\right].
\end{equation}
and, hence, 
\begin{equation}
    \exp[\beta \ell_{s, a}(d; \theta)] = \exp \left[ \beta\gamma^{-d}\left(r(s) + \sum_{t = 1}^{d - 1} \gamma^t r(s_t) + \gamma^d \theta (s_d) \right)\right].
\end{equation}
Therefore, 
\begin{align}
    \bE [\exp \beta \ell_{s, a}(d; \theta)] = {} & \bE \left[ \exp \left[ \beta\gamma^{-d}\left(r(s) + \sum_{t = 1}^{d - 1} \gamma^t r(s_t) \right)\right]\bE \left[\exp \left[ \beta\left(  \theta (s_d) \right)\right] \middle|s_1, \ldots, s_{d  - 1}  \right]\right] \\
    = {} & \bE \left[ \exp \left[ \beta\gamma^{-d}\left(r(s) + \sum_{t = 1}^{d - 1} \gamma^t r(s_t) \right)\right] P^{\pib}(\cdot|s_{d - 1})\right] \exp (\beta \Theta) \\
    = {} &  \bE \left[ \exp \left[ \beta\gamma^{-d}\left(r(s) + \sum_{t = 1}^{d - 2} \gamma^t r(s_t) \right)\right] \exp[\beta \gamma^{ - 1} r(s_{d - 1}) ] P^{\pib}(\cdot|s_{d - 1})\right] \exp (\beta  \Theta).
\end{align}
By repeatedly using iterative conditioning as above, the desired result follows. Note that $\exp(\beta \gamma^{-d} r(s))$ does not depend on the action and is therefore cancelled out with the denominator. 
\end{proof}

\subsection{Proof of Lemma~\ref{lem:analytic_gradE} -- Gradient of \treepolE}

The \treepolE\ gradient  of dimension $A \times S$ is given by
\begin{align*}
\nabla_\theta\log \pid^{\txE} =    
     \beta\left[I_{A} - \ones_A (\pid^{\txE})^\top \right]\frac{ D\left(\pid^{\txE}\right)^{-1} \Esd D(\exp(\beta \Theta))}{{\bf{1}}^{\top}_A \Esd \exp(\beta \Theta)}, 
\end{align*}
where for brevity, we drop the $s$ index in the policy above, i.e., ${\pid^{\txE} \equiv \pid^{\txE}(\cdot|s).}$

\begin{proof}
The $(j, k)$-th entry of $\nabla_\theta\log \pid^{\txE}$ 
satisfies
\begin{align*}
[\nabla_\theta\log \pid^{\txE}]_{j, k} = {} & \frac{\partial \log (\pid^{\txE}(a^j|s))}{\partial \theta(s^k)} \\
= {} & \frac{\partial}{\partial \theta(s^k)} \left(\log [(E_{s, d})_{j}^\tr \exp (\beta  \Theta)] - \log [\ones_A^\tr E_{s, d} \exp(\beta  \Theta) ]\right)\\
= {} & \frac{\beta  (E_{s, d})_{j, k} \exp(\beta  \theta(s^k))  }{(E_{s, d})_{j}^\tr \exp (\beta  \Theta)} - \frac{\beta  \ones_A^\tr E_{s, d} e_k \exp(\beta  \theta(s^k))}{\ones_A^\tr E_{s, d} \exp(\beta  \Theta)} \\
= {} & \frac{\beta  (E_{s, d} e_k  \exp(\beta  \theta(s^k)) )_j }{(E_{s, d})_{j}^\tr \exp (\beta  \Theta)} - \frac{\beta  \ones_A^\tr E_{s, d} e_k \exp(\beta  \theta(s^k))}{\ones_A^\tr E_{s, d} \exp(\beta  \Theta)} \\
= {} & \beta  \left[\frac{e_j^\tr}{e_j^\tr E_{s, d} \exp(\beta  \Theta)} - \frac{\ones_A^\tr}{\ones_A^\tr E_{s, d} \exp(\beta  \Theta)} \right] E_{s, d} e_k \exp (\beta \theta(s^k)).
\end{align*}
Hence, 
\begin{align*}
[\nabla_\theta\log \pid^{\txE}]_{\cdot, k} = \beta  \left[D(E_{s,d} \exp(\beta \Theta))^{-1}  - (\ones_A^\tr E_{s, d} \exp(\beta  \Theta))^{-1} \ones_A \ones_A^\tr \right] E_{s, d} e_k \exp (\beta  \theta(s^k))
\end{align*}
From this, it follows that 
\begin{equation}
\label{e:Grad.Useful.Form}
\nabla_\theta\log \pid^{\txE} = \beta  \left[D\left(\pid^{\txE}\right)^{-1} - \ones_A \ones_A^\tr \right]  \frac{ E_{s, d} D( \exp (\beta  \Theta))}{\ones_A^\tr E_{s, d} \exp(\beta \Theta)}.
\end{equation}

The desired result is now easy to see. 
\end{proof}

\subsection{Proof of Theorem~\ref{thm:rate_result2} --- Exponential variance decay of \treepolE} \label{app:proof_thm2}

There exists $\alpha \in \left(0,1\right)$ such that, for any function $Q: \cS \times \cA \rightarrow \bR,$
\begin{equation*}
\Var\left(\nabla_\theta \log \pid^{\txE}(a|s) Q(s,a)\right) \in \mathcal{O}\left(\beta^2  \alpha^{2d}\right).
\end{equation*}
If all rewards are equal ($r\equiv \text{const}$), then $\alpha = |\lambda_2(P^{\pib})|$.

\begin{proof}[Proof outline]
Recall that thanks to Lemma~\ref{lem:var_bound}, we can bound the PG variance using a direct bound on the gradient norm. The definition of the induced norm is
\begin{equation*}
    \|\nabla_\theta\log \pid^{\txE}\| = \max_{z: \|z\| = 1} \|\nabla_\theta\log \pid^{\txE} z\|,
\end{equation*}
with $\nabla_\theta\log \pid^{\txE}$ given in Lemma~\ref{lem:analytic_gradE}.
Let $z \in \bR^S$ be an arbitrary vector such that $\|z\| = 1$. Then, 
$
    z = \sum_{i = 1}^S c_i z_i,
$
where $c_i$ are scalar coefficients and $z_i$ are vectors spanning the $S$-dimensional space. In the full proof, we show our specific choice of $z_i$ and prove they are linearly independent given that choice. We do note that $z_1=\ones_S.$

The first part of the proof relies on the fact that $(\nabla_\theta\log \pid^{\txE}) z_1 = 0.$ This is easy to verify using Lemma~\ref{lem:analytic_gradE} together with \eqref{eq:SoftTreeMax_matrix2}, and because $\left[I_{A} - \ones_A (\pid^{\txE})^\top \right]$ is a projection matrix whose null-space is spanned by $\ones_S.$  Thus, $$\nabla_\theta\log \pid^{\txE} z = \nabla_\theta\log \pid^{\txE} \sum_{i=2}^S c_i z_i.$$

In the second part of the proof, we focus on $\Esd$ from \eqref{eq:SoftTreeMax_matrix2}, which appears within $\nabla_\theta\log \pid^{\txE}.$ Notice that $\Esd$ consists of the product 
$\prod_{h=1}^{d-1} \left(D\left(\exp(\beta \gamma^{h-d} R\right) P^{\pib} \right).$
Even though the elements in this product are not stochastic matrices, in the full proof we show how to normalize each of them to a stochastic matrix $B_h.$ We thus obtain that
\begin{equation*}
\Esd = P_s  D(M_1) \prod_{h=1}^{d-1} B_h,
\end{equation*}
where $M_1 \in \bR^S$ is some strictly positive vector. Then, we can apply a result by \citet{mathkar2016nonlinear}, which itself builds on \citep{chatterjee1977towards}. The result states that the product of stochastic matrices $\prod_{h=1}^{d-1}B_h$ of our particular form converges exponentially fast to a matrix of the form $\ones_S \mu^\tr$ s.t. $\|\ones_S \mu^\tr - \prod_{h=1}^{d-1}B_h\| \leq C \alpha^d$ for some constant $C.$ 

Lastly,   $\ones_S \mu_{\pib}^\top$ gets canceled due to our choice of $z_i,~i=2,\dots,S.$ This observation along with the above fact that the  remainder decays then shows that $\nabla_\theta\log \pid^{\txE} \sum_{i=2}^S z_i = \mathcal{O}(\alpha^d),$ which gives the desired result.
\end{proof}

\begin{proof}[Full technical proof]
%
%
Let $d \geq 2.$ Recall that 
\begin{equation}
    \Esd = P_s  \prod_{h=1}^{d-1} \left(D\left(\exp[\beta \gamma^{h-d} R]\right) P^{\pib} \right),
\end{equation}
and that $R$ refers to the $S$-dimensional vector whose $s$-th coordinate is $r(s).$

Define
\begin{equation}
    \Sm_i=\begin{cases}
			P^{\pib} & \text{ if } i = d - 1,\\
            D^{-1}(P^{\pib} M_{i + 1}) P^{\pib} D (M_{i + 1}) & \text{ if } i = 1, \ldots, d - 2,
		 \end{cases}
\end{equation}
and the vector
\begin{equation}
\label{def: Mi}
    M_i = 
    \begin{cases}
    \exp(\beta\gamma^{-1} R) & \text{ if } i = d - 1, \\
    \exp (\beta \gamma^{i-d} R) \circ P^{\pib} M_{i + 1} & \text{ if } i = 1, \ldots, d - 2,
    \end{cases}
\end{equation}
where $\circ$ denotes the element-wise product.
Then,
\begin{equation}
\label{eq: Esd decompsition}
    E_{s, d} = P_s D(M_1) \prod_{i = 1}^{d - 1} \Sm_i. 
\end{equation}
It is easy to see that each $B_i$ is a row-stochastic matrix, i.e., all entries are non-negative and $B_i \ones_S = \ones_S.$

Next, we prove that all non-zeros entries of $\Sm_i$ are bounded away from $0$ by a constant. This is necessary to apply the next result from \cite{chatterjee1977towards}. The $j$-th coordinate of $M_i$ satisfies
\begin{equation}
    (M_i)_j =  \exp[\beta \gamma^{i-d} R_j]\sum_{k} [P^{\pib}]_{j,k}(M_{i+1})_k \leq \|\exp[\beta \gamma^{i-d} R]\|_\infty \|M_{i+1}\|_\infty.
\end{equation}
Separately, observe that $\|M_{d - 1}\|_\infty \leq \|\exp(\beta \gamma^{ - 1} R)\|_\infty.$ Plugging these relations in \eqref{def: Mi} gives
%
\begin{equation}
    \|M_1\|_\infty \leq \prod_{h=1}^{d-1}\|\exp[\beta \gamma^{h-d} R]\|_\infty=\prod_{h=1}^{d-1}\|\exp[\beta \gamma^{-d} R]\|^{\gamma^h}_\infty = \|\exp[\beta  \gamma^{-d} R]\|_\infty ^{\sum_{h=1}^{d-1} \gamma^h} \leq \|\exp[\beta \gamma^{-d} R]\|_\infty^{\frac{1}{1 - \gamma}}.
\end{equation}

Similarly, for every $1 \leq i \leq d-1,$ we have that
\begin{equation}
    \|M_i\|_\infty \leq \prod_{h=i}^{d-1}\|\exp[\beta  \gamma^{-d} R]\|^{\gamma^h}_\infty\leq \|\exp[\beta \gamma^{-d} R]\|_\infty^{\frac{1}{1 - \gamma}}.
\end{equation}

The $jk$-th entry of $B_i = D^{-1}(P^{\pib} M_{i+1}) P^{\pib} D(M_{i+1})$ is
\begin{equation}
    (B_i)_{jk} = \frac{P^{\pib}_{jk} [M_{i+1}]_k}{\sum_{\ell = 1}^{|S|} P^{\pib}_{j \ell} [M_{i+1}]_{\ell}} \geq \frac{P^{\pib}_{jk}}{\sum_{\ell = 1}^{|S|} P^{\pib}_{j \ell} [M_{i+1}]_{\ell}} \geq \frac{P^{\pib}_{jk}}{\| \exp[\beta \gamma^{-d} R]\|_\infty^{\frac{1}{1 - \gamma}}}.
\end{equation}
Hence, for non-zero $P^{\pib}_{jk}$, the entries are bounded away from zero by the same. We can now proceed with applying the following result.

Now, by \citep[Theorem~5]{chatterjee1977towards} (see also (14) in \citep{mathkar2016nonlinear}), $    \lim_{d \to \infty} \prod_{i = 1}^{d - 1} \Sm_i$ exists and is of the form $\ones_S \mu^\tr$ for some probability vector $\mu.$ Furthermore, there is some $\alpha \in (0, 1)$ such that $\varepsilon(d):= \left(\prod_{i = 1}^{d - 1} \Sm_i\right) - \ones_S \, \mu^\tr$ satisfies
\begin{equation}
    \left\|\varepsilon(d) \right\| = O(\alpha^d).
\end{equation}
Pick linearly independent vectors $w_2, \ldots, w_S$ such that 
\begin{equation}
\label{eq: wi condtions}
    \mu^\tr w_i = 0 \mbox{ for } i = 2, \ldots, d.
\end{equation}
Since  $\sum_{i = 2}^S \alpha_i w_i$ is perpendicular to $\mu$ for any $\alpha_2, \ldots \alpha_S$ and because $\mu^\tr \exp(\beta  \Theta) > 0,$ there exists no choice of $\alpha_2, \ldots, \alpha_S$ such that $\sum_{i = 2}^S \alpha_i w_i = \exp(\beta  \Theta).$ Hence, if we let  $z_1 = \ones_S$ and $z_i = D(\exp(\beta  \Theta))^{-1} w_i$ for $i = 2, \ldots, S,$ then it follows that $\{z_1, \ldots, z_S\}$ is linearly independent. In particular, it implies that $\{z_1, \ldots, z_S\}$ spans $\bR^{S}.$

Now consider an arbitrary unit norm vector $
z := \sum_{i = 1}^S c_i z_i \in \bR^{S}$ s.t. $\|z\|_2=1.$ Then, 
\begin{align}
\nabla_\theta \log \pid^{\txE} z  = {} & \nabla_\theta \log \pid^{\txE} \sum_{i = 2}^S c_i z_i \label{e:cancel.Allones}\\
= {} & \beta  \left[I_{A} - \ones_A (\pid^{\txE})^\top \right]  \frac{D\left(\pid^{\txE}\right)^{-1} E_{s, d} D( \exp (\beta  \Theta))}{\ones_A^\tr E_{s, d} \exp(\beta  \Theta)} \sum_{i = 2}^S c_i z_i \label{e:Defn.Substitution}\\
= {} & \beta \left[I_{A} - \ones_A (\pid^{\txE})^\top \right]\frac{D\left(\pid^{\txE}\right)^{-1} E_{s, d}}{\ones_A^\tr E_{s, d}\exp(\beta  \Theta)}  \sum_{i = 2}^S c_i w_i \label{e:z.w.swap}\\
= {} & \beta \left[I_{A} - \ones_A (\pid^{\txE})^\top \right] \frac{D\left(\pid^{\txE}\right)^{-1} \left[ \ones_S \mu^\tr + \varepsilon(d)\right] }{\ones_A^\tr E_{s, d}\exp(\beta  \Theta)}  \sum_{i = 2}^S c_i w_i \\
= {} & \beta \left[I_{A} - \ones_A (\pid^{\txE})^\top \right]  \frac{D\left(\pid^{\txE}\right)^{-1} \varepsilon(d) }{\ones_A^\tr E_{s, d}\exp(\beta  \Theta)}   \sum_{i = 2}^S c_i w_i \label{e:mu.cancellation} \\
= {} &\beta \left[I_{A} - \ones_A (\pid^{\txE})^\top \right]  \frac{ D\left(\pid^{\txE}\right)^{-1}\varepsilon(d) D(\exp(\beta  \Theta)) }{\ones_A^\tr E_{s, d}\exp(\beta  \Theta)} (z - c_1 \ones_S), \label{e:z.w.reswap}
\end{align}
where \eqref{e:cancel.Allones} follows from the fact that $\nabla_\theta \log \pid^{\txE} z_1 = \nabla_\theta \log \pid^{\txE} \ones_S = 0,$ \eqref{e:Defn.Substitution} follows from Lemma~\ref{lem:analytic_gradE}, \eqref{e:z.w.swap} holds since $z_i = D(\exp(\beta  \Theta))^{-1} w_i,$ \eqref{e:mu.cancellation} because $\mu$ is perpendicular $w_i$ for each $i,$ while \eqref{e:z.w.reswap} follows by reusing $z_i = D(\exp(\beta  \Theta))^{-1} w_i$ relation along with the fact that $z_1 = \ones_S.$

From \eqref{e:z.w.reswap}, it follows that 
\begin{align}
    \|\nabla_\theta \log \pid^{\txE} z \| \leq {} &  \beta \|\varepsilon(d) \| \left\|\left[I_{A} - \ones_A (\pid^{\txE})^\top \right]  \frac{ D\left(\pid^{\txE}\right)^{-1} }{\ones_A^\tr E_{s, d}\exp(\beta  \Theta)} \right\| \|D(\exp(\beta  \Theta)) \| \, \|z - c_1 \ones_S\| \\
    \leq {} & \beta  \alpha^d (\|I_A\| + \|\ones_A (\pid^{\txE})^\top \|) \left\|  \frac{ D\left(\pid^{\txE}\right)^{-1} }{\ones_A^\tr E_{s, d}\exp(\beta  \Theta)}  \right\|\exp (\beta  \max_s \theta(s))   \|z - c_1 \ones_S\|  \\
    \leq {} & \beta \alpha^d (1 +  \sqrt{A}) \left\|  \frac{ D\left(\pid^{\txE}\right)^{-1} }{\ones_A^\tr E_{s, d}\exp(\beta  \Theta)}  \right\|\exp (\beta  \max_s \theta(s))   \|z - c_1 \ones_S\| \\
    \leq {} & \beta  \alpha^d (1 +  \sqrt{A}) \left\|  D^{-1}(E_{s, d} \exp(\beta  \Theta)) \right\|\exp (\beta  \max_s \theta(s))   \|z - c_1 \ones_S\| \\
    \leq {} & \beta  \alpha^d (1 +  \sqrt{A}) \frac{1}{\min_s [E_{s, d} \exp(\beta  \Theta]_s} \exp (\beta \max_s \theta(s))   \|z - c_1 \ones_S\| \\
    \leq {} & \beta  \alpha^d (1 +  \sqrt{A}) \frac{\exp (\beta  \max_s \theta(s)) }{\exp(\beta \min_s \theta(s))  \min_s |M_1|}  \|z - c_1 \ones_S\| \\
    \leq {} & \beta  \alpha^d (1 +  \sqrt{A}) \frac{\exp (\beta  \max_s \theta(s)) }{\exp(\beta \min_s \theta(s))  \exp (\beta \min_s r(s))}  \|z - c_1 \ones_S\| \\
    \leq {} & \beta  \alpha^d (1 +  \sqrt{A}) \exp(\beta [\max_s \theta(s) - \min_s \theta(s) - \min_s r(s)]) \|z - c_1 \ones_S\|.
\end{align}

Lastly, we prove that $\|z - c_1 \ones_S \|$ is bounded independently of $d.$ First, denote by $c=(c_1,\dots,c_S)^\top$ and $\tilde{c} = (0,c_2,\dots,c_S)^\top.$ Also, denote by $Z$ the matrix with $z_i$ as its $i$-th column. Now,
\begin{align}
\|z - c_1 \ones_S \| &= \| \sum_{i=2}^S c_i z_i \| \\
&= \| Z \tilde{c} \| \\
&\leq \|Z \| \| \tilde{c} \| \\
& \leq \|Z \| \| c \| \\
&= \|Z \| \| Z^{-1} z\| \\
& \leq \|Z \| \| Z^{-1}\| , \label{eq: z z inverse}
\end{align}
where the last relation is due to $z$ being a unit vector. All matrix norms here are $l_2$-induced norms. 

Next, denote by $W$ the matrix with $w_i$ in its $i$-th column. Recall that in \eqref{eq: wi condtions} we only defined $w_2,\dots,w_S.$ We now set $w_1= \exp(\beta \Theta)$. Note that $w_1$ is linearly independent of $\{w_2,\dots,w_S\}$ because of \eqref{eq: wi condtions} together with the fact that $\mu^\top w_1 > 0.$ We can now express the relation between $Z$ and $W$ by $Z=D^{-1}(\exp(\beta\Theta)) W.$ 
Substituting this in \eqref{eq: z z inverse}, we have 
\begin{align}
    \|z - c_1 \ones_S \| & \leq \|D^{-1}(\exp(\beta\Theta)) W \| 
    \| W^{-1} D(\exp(\beta\Theta)) \| \\
& \leq \| W \| \| W^{-1} \| \|D(\exp(\beta\Theta))\| \|D^{-1}(\exp(\beta\Theta))\| . \label{eq: 2nd}
\end{align}
It further holds that
\begin{equation}
    \|D(\exp(\beta\Theta))\| \leq \max_s \exp\left( \beta  \theta(s)\right) \leq \max\{1, \exp[\beta \max_s \theta(s)] )\}, \label{eq: 3rd}
\end{equation}
where the last relation equals $1$ if $\theta(s)<0$ for all $s.$ Similarly, 
\begin{equation}
    \|D^{-1}(\exp(\beta\Theta))\| \leq \frac{1}{\min_s \exp\left( \beta  \theta(s)\right)} \leq \frac{1}{\min\{1, \exp[\beta \min_s \theta(s)] )\}}. \label{eq: 4th}
\end{equation}
Furthermore, by the properties of the $l_2$-induced norm,
\begin{align}
    \| W \|_2 &\leq \sqrt{S}\|W\|_1 \\
    &= \sqrt{S} \max_{1 \leq i \leq S} \|w_i\|_1 \\
    & = \sqrt{S} \max \{\exp(\beta  \Theta), \max_{2 \leq i \leq S}\|w_i\|_1\} \\
    & \leq \sqrt{S} \max\{1, \exp[\beta \max_s \theta(s)], \max_{2 \leq i \leq S}\|w_i\|_1 )\}. \label{eq: 5th}
\end{align}

Lastly,
\begin{align}
    \| W ^{-1}\| & = \frac{1}{\sigma_{\min}(W)} \\
    & \leq \left( \prod_{i=1}^{S-1} \frac{\sigma_{\max}(W)}{\sigma_i(W)}\right)\frac{1}{\sigma_{\min}(W)} \\
    & = \frac{\left(\sigma_{\max}(W)\right)^{S-1}}{\prod_{i=1}^{S}\sigma_i(W)} \\
    & = \frac{\|W\|^{S-1}}{|\det(W)|}. \label{eq: determinant}
\end{align}
The determinant of $W$ is a sum of products involving its entries. To upper bound \eqref{eq: determinant} independently of $d,$ we lower bound its denominator by upper and lower bounds on the entries $[W]_{i,1}$ that are independent of $d,$ depending on their sign:
\begin{equation}
    \min\{1, \exp[\beta \min_s \theta(s)] )\} \leq [W]_{i,1} \leq \max\{1, \exp[\beta \max_s \theta(s)] )\}.
\end{equation}
Using this, together with \eqref{eq: z z inverse}, \eqref{eq: 2nd}, \eqref{eq: 3rd}, \eqref{eq: 4th}, and \eqref{eq: 5th}, we showed that $\|z - c_1 \ones_S\|$ is upper bounded by a constant independent of $d.$ This concludes the proof.
\end{proof}

\subsection{Bias Estimates} \label{app:biasest}

\begin{lemma} \label{lem: power decomposition}
    For any matrix $A$ and $\h{A},$ 
    $$\h{A}^k - A^k = \sum_{h = 1}^k \h{A}^{h - 1} (\h{A} - A)A^{k - h}.$$ 
\end{lemma}
\begin{proof}
    The proof follows from first principles:
    \begin{align}
    \sum_{h = 1}^k \h{A}^{h - 1} (\h{A} - A)A^{k - h} &= \sum_{h = 1}^k \h{A}^{h - 1} \h{A} A^{k - h} - \sum_{h = 1}^k \h{A}^{h - 1} A A^{k - h} \\
    &= \sum_{h = 1}^k \h{A}^{h}  A^{k - h} - \sum_{h = 1}^k \h{A}^{h - 1} A^{k - h + 1} \\ 
    &= \h{A}^k - A^k + \sum_{h = 1}^{k-1} \h{A}^{h}  A^{k - h} - \sum_{h = 2}^k \h{A}^{h - 1}  A^{k - h + 1} \\
    &= \h{A}^k - A^k.
    \end{align}
    
\end{proof}

Henceforth, $\|\cdot\|$ will refer to $\|\cdot\|_\infty,$ i.e. the induced infinity norm. Also, for brevity, we denote $\pid^{\txC}$ and $\pidhat^{\txC}$ by $\pi_\theta$ and $\h{\pi}_\theta,$ respectively. Similarly, we use $d_{\pi_\theta}$ and $d_{\h{\pi}_\theta}$ to denote $d_{\pid^{\txC}}$ and $d_{\pidhat^{\txC}}.$ As for the induced norm of the matrix $P$ and its perturbed counterpart $\h{P},$ which are of size $S \times A \times S,$ we slightly abuse notation and denote $\|P-\h{P}\| = \max_s\{\|P_s - \h{P}_s\|\},$ where $P_s$ is as defined in Section~\ref{sec:preliminaries}.
\begin{definition} \label{def: epsilon}
    Let $\epsilon$ be the maximal model mis-specification, i.e.,
$\max\{\|P - \hP\|, \|r- \hr\|\} = \epsilon.$
\end{definition}

\begin{lemma}
    \label{lem: linear deviations}
    Recall the definitions of $R_s, P_s, R_{\pi_b}$ and $P^{\pi_b}$ from Section~\ref{sec:preliminaries}, and respectively denote their perturbed counterparts by $\h{R}_s, \h{P}_s, \h{R}_{\pi_b}$ and $\h{P}^{\pi_b}$. Then, for $\epsilon$ defined in Definition~\ref{def: epsilon}, 
\begin{equation} \label{eq: bound on pertubations}
\max\{\|R_s - \h{R}_s\|,\|P_s - \h{P}_s\|,\|R_{\pi_b} - \h{R}_{\pi_b}\|,\|P^{\pi_b} - \h{P}^{\pi_b}\|\} = O(\epsilon).    
\end{equation}
\end{lemma}
\begin{proof}
    The proof follows easily from the fact that the differences above are convex combinations of $P-\h{P}$ and $r-\h{r}.$
\end{proof}

\begin{lemma}
\label{lem:pol.Diff.Bd}
 Let $\pi_\theta$ be as in \eqref{eq:SoftTreeMax_matrix}, and let $\h{\pi}_\theta$ also be defined as in \eqref{eq:SoftTreeMax_matrix}, but with $R_s, P_s, P^{\pib}$ replaced by their perturbed counterparts $\hat{R}_s, \hat{P}_s, \hat{P}^{\pib}$ throughout. 
 Then, 
\begin{equation}
    \|\pid^{\txC} - \pidhat^{\txC}\| = O(\beta d \gamma^{-d} \epsilon).
\end{equation}
\end{lemma}
\begin{proof}

To prove the desired result, we work with \eqref{eq:SoftTreeMax_matrix} to bound the error between $R_s, P_s, P^{\pib},R_{\pib}$ and their perturbed versions. 

First, we apply Lemma~\ref{lem: power decomposition}  together with Lemma~\ref{lem: linear deviations} to obtain that $\|(P^{\pi_b})^k - (\h{P}^{\pi_b})^k\|=O(k\epsilon).$ Next, denote by $M$ the argument in the exponent in \eqref{eq:SoftTreeMax_matrix}, i.e. $$M := \beta[C_{s, d} +  P_s (P^{\pi_b})^{d - 1} \Theta ].$$ Similarly, let $\h{M}$ be the corresponding perturbed sum that relies on $\h{P}$ and $\h{r}$.
Combining the bounds from Lemma~\ref{lem: linear deviations}, and using the triangle inequality, we have that $\|\h{M} - M\| = O(\beta d \gamma^{-d} \epsilon)$. The factor $\gamma^{-d}$ appears because $C_{s,d}$ includes the term $\gamma^{-d}$ as shown in Lemma~\ref{def:pidC}.

Eq.~\eqref{eq:SoftTreeMax_matrix} states that the C-SoftTreeMax policy in the true environment is $\pi_\theta = \exp(M)/(1^\top \exp(M))$. Similarly define $\h{\pi}_\theta$ using $\h{M}$ for the approximate model. Then,
	$$
		\h{\pi}_\theta = (\pi_\theta \circ \exp(\h{M} - M)) 1^\top \exp(M)/(1^\top \exp(\h{M})),
	$$
	where $\circ$ denotes element-wise multiplication. 
 Using the above relation, we have that $\|\h{\pi}_\theta - \pi_\theta\| = \|\pi_\theta\| \|\frac{\exp(\h{M} - M) 1^\top \exp(M)}{1^\top \exp(\h{M})} - 1\|.$ Using the relation $|e^x - 1| = O(x)$ as $x \to 0,$ the desired result follows.

\end{proof}

\begin{theorem}
Let $\epsilon$ be as in Definition~\ref{def: epsilon}. Further let $\pidhat^{\txC}$ being the corresponding approximate policy as given in Lemma~\ref{def:pidC}. Then, the policy gradient bias is bounded by 
\begin{equation}
    \left\|\frac{\partial}{\partial \theta}  \left(\nu^\top V^{\pi_\theta}\right) -  \frac{\partial}{\partial \theta}  \left(\nu^\top V^{\h{\pi}_\theta}\right) \right\| = \mathcal{O}\left(\frac{1}{(1-\gamma)^2} S \beta^2 d \gamma^{-d} \epsilon\right).
\end{equation}
\end{theorem}

We first provide a proof outline for conciseness, and only after it the complete proof. 

\begin{proof}[Proof outline]
%

%
%
First, we prove that $\max\{\|R_s - \h{R}_s\|,\|P_s - \h{P}_s\|,\|R_{\pi_b} - \h{R}_{\pi_b}\|,\|P^{\pi_b} - \h{P}^{\pi_b}\|\} = \mathcal{O}(\epsilon).$ This follows from the fact that the differences above are suitable convex combinations of either the rows of $P-\h{P}$ or $r-\h{r}.$ 
We use the above observation along with the definitions of $\pid^{\txC}$ and $\pidhat^{\txC}$ given in \eqref{eq:SoftTreeMax_matrix} to show that $\|\pid^{\txC} - \pidhat^{\txC}\| = O(\beta d \gamma^{-d} \epsilon).$ The proof for the latter builds upon two key facts: (a) $\|(P^{\pib})^k - (\hat{P}^{\pib})^k\| \leq \sum_{h = 1}^k \|\h{P}^{\pib}\|^{h - 1} \|\h{P}^{\pib} - P^{\pib} \| \|p^{\pib}\|^{k - h} =   O(k \epsilon)$ for any $k \geq 0$, and (b)  $|e^x - 1| = O(x)$ as $x \to 0.$ Next, we decompose the LHS of \eqref{eq: bias bound} to get
$$
 \sum_{s} \left( \prod_{i = 1}^4 X_i(s) - \prod_{i = 1}^4 \h{X}_i(s) \right) 
       = 
       \sum_s \sum_{i = 1}^4 \h{X}_1(s) \cdots \h{X}_{i - 1}(s) \left(X_i(s) - \h{X}_i(s)\right) \times 
  X_{i + 1}(s) \cdots X_4(s),
$$
where $X_1(s) = d_{\pid^{\txC}}(s) \in \bR,$ $X_2(s) = (\nabla_\theta\log\pid^{\txC}(\cdot|s))^\tr \in \bR^{S \times A},$ $X_3(s) =  D(\pid^{\txC}(\cdot|s)) \in \bR^{A \times A},$ $X_4(s) = Q^{\pid^{\txC}}(s, \cdot) \in \bR^{A \times A},$ and $\h{X}_1(s), \ldots, \h{X}_4(s)$ are similarly defined with $\pid^{\txC}$ replaced by $\pidhat^{\txC}.$ Then, we show that, for $i = 1, \ldots, 4,$ (i) $\|X_i(s) - \h{X}_i(s)\| = O(\gamma^{-d} \epsilon)$ and (ii) $\max\{\|X_i\|,\|\h{X}_i\|\}$ is bounded by problem parameters. From this, the desired result follows.
\end{proof}

\begin{proof}
We have
\begin{align}
    \frac{\partial}{\partial \theta} & \left(\nu^\top V^{\pi_\theta}\right) -   \frac{\partial}{\partial \theta}  \left(\nu^\top V^{\pi'_\theta}\right) \\
  = {} & \bE_{s\sim{d_{\pi_\theta}},a\sim \pi_\theta(\cdot|s)} \left[\nabla_\theta\log\pi_\theta(a|s)Q^{\pi_\theta}(s,a)\right] - \bE_{s\sim{d_{\h{\pi}_\theta}}, 
 a\sim\h{\pi}_\theta(\cdot|s)}\left[\nabla_\theta\log \h{\pi}_\theta(a|s)Q^{\h{\pi}_\theta}(s,a)\right] \\
 = {} & \sum_{s, a} \left(d_{\pi_\theta}(s) \pi_\theta(a|s) \nabla_\theta\log\pi_\theta(a|s)Q^{\pi_\theta}(s,a) -  d_{\h{\pi}_\theta}(s)  \h{\pi}_\theta(a|s) \nabla_\theta \log\h{\pi}_\theta(a|s)Q^{\h{\pi}_\theta}(s,a) \right) \\
  %
  = {} & \sum_{s} \Big(d_{\pi_{\theta}}(s) (\nabla_\theta\log\pi_\theta(\cdot|s))^\tr D(\pi_\theta(\cdot|s)) Q^{\pi_\theta}(s, \cdot) \\
  {} & - d_{\h{\pi}_{\theta}}(s) (\nabla_\theta\log\h{\pi}_\theta(\cdot|s))^\tr D(\h{\pi}_\theta(\cdot|s)) Q^{\h{\pi}_\theta}(s, \cdot)\Big)\\
 = {} & \sum_{s} \left( \prod_{i = 1}^4 X_i(s) - \prod_{i = 1}^4 \h{X}_i(s) \right) 
     \\
 = {} & \sum_s \sum_{i = 1}^4 \h{X}_1(s) \cdots \h{X}_{i - 1}(s) \left(X_i(s) - \h{X}_i(s)\right) X_{i + 1}(s) \cdots X_4(s),
 %
\end{align}
where $X_1(s) = d_{\pi_\theta}(s) \in \bR,$ $X_2(s) = (\nabla_\theta\log\pi_\theta(\cdot|s))^\tr \in \bR^{S \times A},$ $X_3(s) =  D(\pi_\theta(\cdot|s)) \in \bR^{A \times A},$ $X_4(s) = Q^{\pi_\theta}(s, \cdot) \in \bR^{A \times A},$ and $\h{X}_1(s), \ldots, \h{X}_4(s)$ are similarly defined with $\pi_\theta$ replaced by $\h{\pi}_\theta.$

Therefore,
\begin{equation}
    \left\|\frac{\partial}{\partial \theta} \left(\nu^\top V^{\pi_\theta}\right) -   \frac{\partial}{\partial \theta}  \left(\nu^\top V^{\pi'_\theta}\right)\right\| \leq \left(\max_s \Gamma(s) \right) S,
\end{equation}
where 
%
%
\begin{equation}
    \Gamma(s) = \|\sum_{i = 1}^4 \h{X}_1(s) \cdots \h{X}_{i - 1}(s) \left(X_i(s) - \h{X}_i(s)\right) X_{i + 1}(s) \cdots X_4(s)\|.
\end{equation}

Next, since $d_{\pi_\theta}, d_{\hat{\pi}_\theta}, \pi_\theta,$ and $\h{\pi}_\theta$ are all distributions, we have
\begin{equation}
    \max\{|X_1(s)|, |\h{X_1}(s)|, |X_3(s, a)|, |\h{X_3}(s, a)|\} \leq 1.
\end{equation}
Separately, using Lemma~\ref{lem:analytic_gradC}, 
we have
\begin{equation}
    \|X_2\| = \|\nabla_\theta \log\pi_\theta(a|s) \| \leq \beta  (\|I_A\| + \|\ones_A \pi_\theta^\top\|) \|P_s\| \|(P^{\pib})^{d - 1}\|.
\end{equation}
Since all rows of the above matrices have non-negative entries that add up to $1,$ we get
\begin{equation} \label{eq: C bound}
    \|Y\| \leq 2 \beta .
\end{equation}

In the rest of the proof, we bound each of $\|X_1 - \h{X_1}\|, \ldots, \|X_4 - \h{X_4}\|.$

Finally, 
\begin{equation}
    \|X_4\| \leq \frac{1}{1 - \gamma}.
\end{equation}
Similarly, the same bounds hold for $\h{X_1},\h{X_2},\h{X_3}$ and $\h{X_4}.$

 From, we have
\begin{align}
    \|X_1 - \h{X_1}\| \leq {} & (1 - \gamma) \sum_{t = 0}^\infty \gamma^t \|\nu^\top (P^{\pi_\theta})^t - \nu^\top (P^{\h{\pi}_\theta})^t\| \\
    \leq {} & (1 - \gamma) \|\nu\| \sum_{t = 0} \gamma^t td \epsilon  \label{e:Diff.P.Bd} \\
    \leq {} & (1 - \gamma) d \epsilon \sum_{t = 0}^\infty \gamma^t t \\
    = {} &  \frac{\gamma d \epsilon}{1 - \gamma}.
\end{align}
%

The last relation follows from the fact that $(1 - \gamma)^{-1} = \sum_{t = 0}^\infty \gamma^t,$ which in turn implies
\begin{equation}
     \gamma \frac{\partial }{\partial \gamma } \left(\frac{1}{1 - \gamma}\right) = \sum_{t = 0}^\infty t \gamma^t.
\end{equation}

From Lemma~\ref{lem:pol.Diff.Bd}, it follows that 
\begin{equation}
    \|X_3 - \h{X_3}\| = O(\beta d \epsilon).
\end{equation}

Next, recall that from Lemma~\ref{lem:analytic_gradC} that
\begin{align*} 
 X_2(s,\cdot) = \beta\left[I_{A} -  \ones_A {(\pi_\theta})^\top \right]P_s \left(P^{\pib}\right)^{d-1}.
\end{align*}
Then,
\begin{align}
    \|X_2(s,\cdot) - \h{X_2}(s,\cdot)\| \leq &  \|\beta\left[I_{A} -  \ones_A {(\pi_\theta})^\top \right]P_s\|\| \left(P^{\pib}\right)^{d-1} - \left(\h{P}^{\pib}\right)^{d-1}\| \label{eq: first term}\\
    {} & +  \|  \beta\left[I_{A} -  \ones_A {(\pi_\theta})^\top \right]\| \|P_s - \h{P}_s\| \|\left(\h{P}^{\pib}\right)^{d-1}\| \label{eq: second term} \\
    {} & +   \beta \|\ones_A {(\pi_\theta})^\top  - \ones_A {(\h{\pi}_\theta})^\top \| \|\h{P}_s \left(\h{P}^{\pib}\right)^{d-1}\|\label{eq: third term}.
\end{align}
Following the same argument as in \eqref{eq: C bound} and applying Lemma~\ref{lem: power decomposition}, we have that \eqref{eq: first term} is $O(\beta d \epsilon).$ Similarly, from the argument of \eqref{eq: C bound}, Eq.~\eqref{eq: second term} is $O(\beta \epsilon)$. Lastly, \eqref{eq: third term} is $O(\beta d \epsilon)$ due to Lemma~\ref{lem:pol.Diff.Bd}. Putting the above three terms together, we have that 
\begin{equation}
    \|X_2(s,\cdot) - \h{X_2}(s,\cdot)\| = O(\beta d \epsilon).
\end{equation}

Since the state-action value function satisfies the Bellman equation, we have
\begin{equation}
    Q^{\pi_\theta} = r + \gamma P Q^{\pi_\theta}
\end{equation}
and
\begin{equation}
    Q^{\h{\pi}_\theta} = \h{r} + \gamma \h{P} Q^{\h{\pi}_\theta}.
\end{equation}
Consequently,
\begin{align}
    \|Q^{\pi_\theta} - Q^{\h{\pi}_\theta}\| \leq {} & \|r - \h{r}\| + \gamma \|P Q^{\pi_\theta} - P  Q^{\h{\pi}_\theta}\| + \gamma \|P Q^{\h{\pi}_\theta} - \h{P} Q^{\h{\pi}_\theta}\| \\
    \leq {} & \epsilon + \gamma \|P\| \|Q^{\pi_\theta} - Q^{\h{\pi}_\theta}\| + \gamma \|P - \h{P}\| \|Q^{\h{\pi}_\theta}\| \\
    \leq {} & \epsilon + \gamma \|Q^{\pi_\theta} - Q^{\h{\pi}_\theta}\| + \frac{\gamma}{1 - \gamma} \epsilon,
\end{align}
which finally shows that
\begin{equation}
    \|X_4 - \h{X_4}\| = \|Q^{\pi_\theta} - Q^{\h{\pi}_\theta}\| \leq \frac{\epsilon}{(1 - \gamma)^2}.
\end{equation}

\end{proof}

\section{Experiments} \label{app:experiments}

\subsection{Implementation Details}
The environment engine is the highly efficient Atari-CuLE \citep{dalton2020accelerating}, a CUDA-based version of Atari that runs on GPU. Similarly, we use Atari-CuLE for the GPU-based breadth-first TS as done in \cite{dalal2021improve}.

We train \treepol{} for depths $d=1 \dots 8,$ with a single worker. We use five seeds for each experiment. 

For the implementation, we extend Stable-Baselines3 \citep{raffin2019stable} with all parameters taken as default from the original PPO paper \citep{schulman2017proximal}. For depths $d \geq 3$, we limited the tree to a maximum width of $1024$ nodes and pruned non-promising trajectories in terms of estimated weights. Since the distributed PPO baseline advances significantly faster in terms of environment steps, for a fair comparison, we ran all experiments for one week on the same machine and use the wall-clock time as the x-axis. We use Intel(R) Xeon(R) CPU E5-2698 v4 @ 2.20GHz equipped with one NVIDIA Tesla V100 32GB.  

\subsection{GPU-Based Tree Expansion Implementation}
\label{app:tree_expansion}

Figure~\ref{fig:tree_expansion} illustrates the GPU-based tree expansion mechanism used in our implementation. We achieve efficient parallelization by duplicating and concatenating all states in the current level of the tree with each possible action, then advancing them simultaneously with a single forward pass through the simulator.

\begin{figure}[h]
\centering
\includegraphics[width=0.9\textwidth]{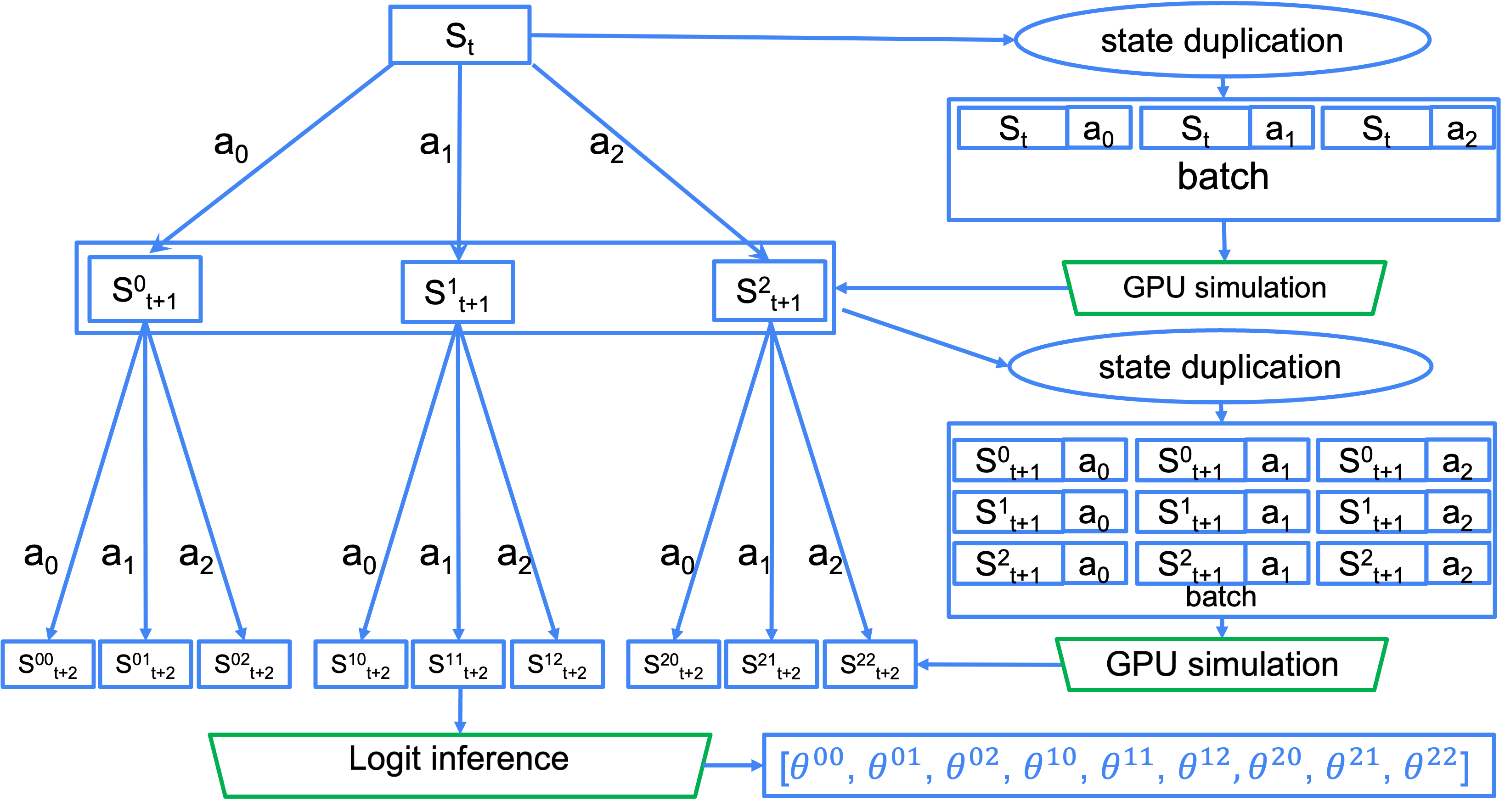}
\caption{A diagram of the tree expansion used by SoftTreeMax. In every step, the states in the current level of the tree are duplicated and concatenated with each possible action. The resulting state-action pairs are then fed as a batch to the GPU simulator to generate the next level of states. Finally, the states of the last level $d$ are inserted into the neural network $W_\theta$ and the logits are computed using the corresponding rewards along each trajectory.}
\label{fig:tree_expansion}
\end{figure}

The implementation follows these steps:
\begin{enumerate}
    \item Start with a root state $s_0$ and expand it across all actions $a \in \mathcal{A}$.
    \item For each depth level $t$ from 1 to $d$:
        \begin{itemize}
            \item Collect all states $s_t$ from the previous level.
            \item Duplicate each state for each action to create a batch of state-action pairs.
            \item Submit the entire batch to the GPU simulator in a single forward pass.
            \item Collect the resulting next states and corresponding rewards.
            \item If $t < d$ and pruning is enabled, select the top-$k$ most promising branches based on accumulated rewards.
        \end{itemize}
    \item For the leaf states (at depth $d$), compute the neural network outputs $W_\theta(s_d)$.
    \item Combine the accumulated rewards along each trajectory with the corresponding leaf state values.
    \item Compute the final policy logits according to Equation~\eqref{eq:logit} and apply the softmax operation.
\end{enumerate}

This parallel implementation allows us to efficiently explore a larger number of trajectories compared to sequential tree expansion, making deeper tree depths practically feasible.

\subsection{Algorithms}
\label{app:algorithms}

This section provides the pseudocode for our SoftTreeMax implementation. Algorithm~\ref{alg:SoftTreeMax} details the C-SoftTreeMax policy computation, which efficiently utilizes GPU parallelization to perform tree expansion. Algorithm~\ref{alg:SoftTreeMaxPPO} shows how SoftTreeMax integrates with the PPO algorithm, distinguishing the usage of our new policy in red.

{\small{
\begin{algorithm}[tb] 
   \caption{C-SoftTreeMax}
   \label{alg:SoftTreeMax}
\begin{algorithmic}
   \STATE {\bfseries Input:} GPU environment ${\mathcal G}$, network $\theta$, depth $d$
   \STATE {\bfseries Init tensors:} state $\bar{S} = [s],$ action $\bar{A_0} = \left[0, 1, 2, .., A-1\right]$, reward  $\bar{R}=[0]$
   \FOR{$i_d=0$ {\bfseries to} $d-1$}
   \STATE $\bar{S} \leftarrow \bar{S} \times A,~~\bar{R} \leftarrow \bar{R} \times A$ \COMMENT{Replicate state and reward tensors $A$ times} 
   \STATE $\bar{r},\bar{S}' = {\mathcal G}([\bar{S},\bar{A}])$ \COMMENT{Feed  $[\bar{S},\bar{A}]$ to simulator and advance}
   \STATE $\bar{R} \leftarrow \bar{R} + \gamma^{i_d} \bar{r}, ~~\bar{S} \leftarrow \bar{S}'$ \COMMENT{Accumulate discounted reward }
   \STATE  $\bar{A} \leftarrow \bar{A} \times A$ \COMMENT{Replicate action tensor $A$ times}
   \ENDFOR
   \STATE $l_{s,a} \leftarrow \text{Average}^{\pi_b}_{A_0=a}(\bar{R} + \gamma^d \theta( \bar{S}))/\gamma^d$\COMMENT{Weighted average induced by $\pi_b$}
   \STATE \textbf{Return} $\pi (a|s_0)  \propto \exp \left[\beta l_{s, a}(d;\theta) \right]$ \COMMENT{Return optimal action at the root}
\end{algorithmic}
\end{algorithm}
}}

\begin{algorithm}
\caption{SoftTreeMax-PPO}
\label{alg:SoftTreeMaxPPO}
\begin{algorithmic}[1]
\STATE Initialize policy parameters $\theta_0$
\STATE Initialize value function parameters $\phi$
\FOR{$k = 0, 1, 2, \ldots$}
    \STATE Collect set of trajectories $\mathcal{D}_k = \{\tau_i\}$ by running policy {\color{red}$\pi_{d,\theta_k}$} from Algorithm~\ref{alg:SoftTreeMax}
    \STATE Compute rewards-to-go $\hat{R}_t$
    \STATE Compute advantage estimates $\hat{A}_t$ using GAE with $\lambda=0.95$
    \FOR{each epoch}
        \FOR{each minibatch}
            \STATE Compute policy ratio $r_t(\theta) = \frac{{\color{red}\pi_{d,\theta}(a_t|s_t)}}{{\color{red}\pi_{d,\theta_k}(a_t|s_t)}}$
            \STATE Compute clipped surrogate objective:
            \STATE $L^{CLIP}(\theta) = \mathbb{E}_t[\min(r_t(\theta)\hat{A}_t, \text{clip}(r_t(\theta), 1-\epsilon, 1+\epsilon)\hat{A}_t)]$
            \STATE Update $\theta$ with gradient step on $L^{CLIP}(\theta)$
            \STATE Compute value function loss: $L^{VF}(\phi) = (V_\phi(s_t) - \hat{R}_t)^2$
            \STATE Update $\phi$ with gradient step on $L^{VF}(\phi)$
        \ENDFOR
    \ENDFOR
\ENDFOR
\end{algorithmic}
\end{algorithm}

Note that in Algorithm~\ref{alg:SoftTreeMaxPPO}, we use Generalized Advantage Estimation (GAE) with $\lambda=0.95$ for calculating advantage estimates, which is the standard configuration in the stable-baselines3 PPO implementation that we build upon.

\subsection{Time-Based Training Curves}
We provide the training curves in Figure~\ref{fig:train_curves}. For brevity, we exclude a few of the depths from the plots. As seen, there is a clear benefit for \treepol{} over distributed PPO with the standard softmax policy. In most games, PPO with the \treepol{} policy shows very high sample efficiency: it achieves higher episodic reward although it observes much less episodes, for the same running time. 

\begin{figure}[H]
    \centering
\includegraphics[width=1.00\textwidth]{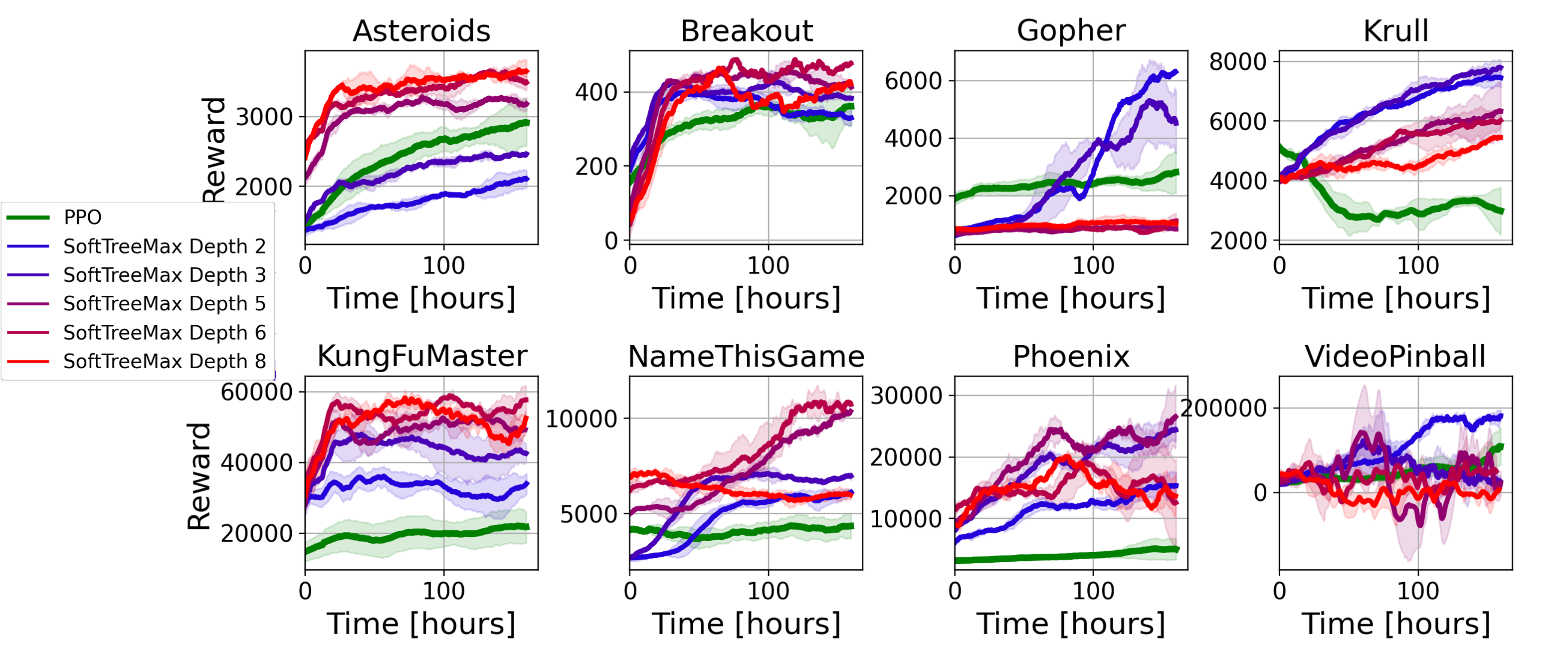}
    \caption{\textbf{Training curves: GPU \treepol{} (single worker) vs PPO ($\bf{256}$ GPU workers).} The plots show average reward and standard deviation over 5 seeds.  The x-axis is the wall-clock time. The runs ended after one week with varying number of time-steps. The training curves correspond to the evaluation runs in Figure~\ref{fig:variance_curves}.}  \label{fig:train_curves}
\end{figure}

\subsection{Step-Based Training Curves} \label{sec: step based plots}
In Figure~\ref{fig:train_curves_steps} we also provide the same convergence plots where the x-axis is now the number of online interactions with the environment, thus excluding the tree expansion complexity. As seen, due to the complexity of the tree expansion, less steps are conducted during training (limited to one week) as the depth increases. In this plot, the monotone improvement of the reward with increasing tree depth is noticeable in most games. 

\begin{figure}[H]
    \centering
\includegraphics[scale=0.53]{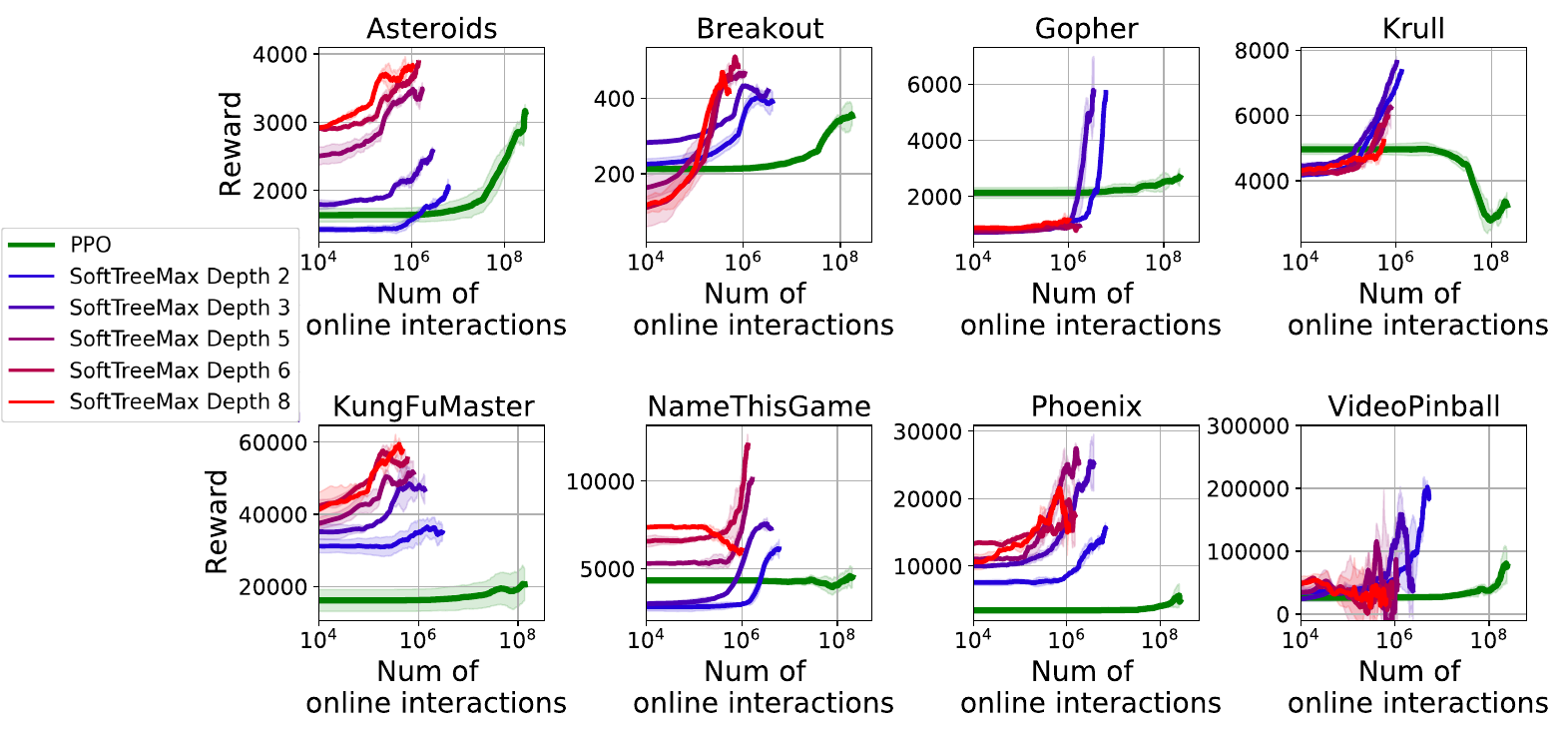}
    \caption{\textbf{Training curves: GPU \treepol{} (single worker) vs PPO ($\bf{256}$ GPU workers).} The plots show average reward and standard deviation over 5 seeds. The x-axis is the number of online interactions with the environment. The runs ended after one week with varying number of time-steps. The training curves correspond to the evaluation runs in Figure~\ref{fig:variance_curves}.}  \label{fig:train_curves_steps}
\end{figure}

We note that not for all games we see monotonicity. Our explanation for this phenomenon relates to how immediate reward contributes to performance compared to the value. Different games benefit differently from long-term as opposed to short-term planning. Games that require longer-term planning need a better value estimate. A good value estimate takes longer to obtain with larger depths, in which we apply the network to states that are very different from the ones observed so far in the buffer (recall that as in any deep RL algorithm, we train the model only on states in the buffer). If the model hasn’t learned a good enough value function yet, and there is no guiding dense reward along the trajectory, the policy becomes noisier, and can take more steps to converge -- even more than those we run in our week-long experiment. 

For a concrete example, let us compare Breakout to Gopher. Inspecting Fig.~\ref{fig:train_curves_steps}, we observe that Breakout quickly (and monotonically) gains from large depths since it relies on the short term goal of simply keeping the paddle below the moving ball. In Gopher, however, for large depths (>=5), learning barely started even by the end of the training run. Presumably, this is because the task in Gopher involves multiple considerations and steps: the agent needs to move to the right spot and then hit the mallet the right amount of times, while balancing different locations. This task requires long-term planning and thus depends more strongly on the accuracy of the value function estimate. In that case, for depth 5 or more, we would require more train steps for the value to “kick in” and become beneficial beyond the gain from the reward in the tree.

The figures above convey two key observations that occur for at least some non-zero depth: (1) The final performance with the tree is better than PPO (Fig. \ref{fig:variance_curves}); and (2) the intermediate step-based results with the tree are better than PPO (Fig.~\ref{fig:train_curves_steps}). This leads to our main takeaway from this work –- there is no reason to believe that the vanilla policy gradient algorithm should be better than a multi-step variant. Indeed, we show that this is not the case.

\section{Further discussion}

\subsection{The case of $\lambda_2(P^{\pi_b})=0$} \label{sec: zero grad}
When $P^{\pi_b}$ is rank one, it is not only its variance that becomes $0$, but also the norm of the gradient itself (similarly to the case of $d\rightarrow\infty$). Note that such a situation will happen rarely, in degenerate MDPs. This is a local minimum for \treepol{} and it would cause the PG iteration to get stuck, and to the optimum in the (desired but impractical) case where $\pi_b$ is the optimal policy. However, a similar phenomenon was also discovered in the standard softmax with deterministic policies: $\theta(s, a) \rightarrow \infty$ for one $a$ per $s$. PG with softmax would suffer very slow convergence near these local equilibria, as observed in \cite{mei2020escaping}. To see this, note that the softmax gradient is $\nabla_\theta\log \pi_\theta(a|s) = e_{a} - \pi_\theta(\cdot|s),$ where $e_{a} \in [0,1]^A$ is the vector with 0 everywhere except for the $a$-th coordinate. I.e., it will be zero for a deterministic policy. \treepol{} avoids these local optima by integrating the reward into the policy itself (but may get stuck in another, as discussed above).

\end{document}